\documentclass[11pt]{article}

\usepackage[preprint]{acl}
\usepackage{times}
\usepackage{latexsym}
\usepackage[T1]{fontenc}
\usepackage[utf8]{inputenc}
\usepackage{microtype}
\usepackage{inconsolata}
\usepackage{graphicx}
\usepackage{etoolbox}
\usepackage{amsmath,amssymb,amsthm}
\usepackage{mathtools}
\usepackage{bm}
\usepackage{xcolor}
\usepackage{hyperref}
\usepackage{fontawesome5}

\definecolor{JevaIconRed}{HTML}{B2483F}
\definecolor{JevaIconTeal}{HTML}{397E79}
\definecolor{JevaResourceRed}{HTML}{7F1D1D}
\usepackage{booktabs,multirow,colortbl}

\definecolor{JevaInk}{HTML}{111111}
\definecolor{JevaTeal}{HTML}{CC0000}
\definecolor{JevaTableHeader}{HTML}{FFF1CC}
\definecolor{JevaRowColor}{HTML}{FBECEC}
\definecolor{JevaRankOneColor}{HTML}{FFBF2B}
\definecolor{JevaRankTwoColor}{HTML}{FFE5A0}
\definecolor{JevaRankThreeColor}{HTML}{FFF1CC}
\newcommand{\JevaModelKey}[1]{\textcolor[HTML]{#1}{\rule{0.75ex}{0.75ex}}\,}
\providecommand{\JevaRankOne}[1]{\cellcolor{JevaRankOneColor}\textbf{#1}}
\providecommand{\JevaRankTwo}[1]{\cellcolor{JevaRankTwoColor}#1}
\providecommand{\JevaRankThree}[1]{\cellcolor{JevaRankThreeColor}#1}

\newsavebox{\JevaTableBox}
\newcommand{\JevaFitTable}[1]{%
  \sbox{\JevaTableBox}{#1}%
  \ifdim\wd\JevaTableBox>\linewidth
    \resizebox{\linewidth}{!}{\usebox{\JevaTableBox}}%
  \else\usebox{\JevaTableBox}\fi}

\newtheorem{proposition}{Proposition}

\DeclareMathOperator{\softmax}{softmax}

\title{Jeva: System One Models from First Principles}

\date{}
\author{ Qi Li \quad Xinchao Wang \\
  National University of Singapore \\
  \texttt{liqi@u.nus.edu; xinchao@nus.edu.sg} \\
}

\begin{document}

\makeatletter
\patchcmd{\@maketitle}
  {\vskip 0.2in plus 1fil minus 0.1in}
  {\vskip 4mm}
  {}
  {\PackageError{jeva}{Title-author spacing patch failed}{}}
\makeatother

\makeatletter
\apptocmd{\@maketitle}{%
  \par
  \noindent

\par\vspace{-25mm}
\noindent
\begin{minipage}{\textwidth}
  \centering
  \begingroup
  \hypersetup{
    colorlinks=true,
    urlcolor=JevaResourceRed
  }

\href{https://huggingface.co/LIQIIIII/Jeva-4B}{%
  \raisebox{-0.18em}{%
    \includegraphics[height=1.5em]{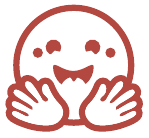}%
  }%
}
\hspace{1.8em}
\href{https://github.com/LiQiiiii/Jeva}{%
  \textcolor{JevaIconRed}{\scalebox{1.5}{\faGithub}}%
}
\hspace{1.8em}
\href{https://liqiiiii.github.io/Jeva/}{%
  \textcolor{JevaIconRed}{\scalebox{1.5}{\faGlobe}}%
}

  \endgroup
\end{minipage}
\par\vspace{3mm}

  \begin{minipage}{\textwidth}
    \centering
    \includegraphics[
      width=0.94\textwidth,
    ]{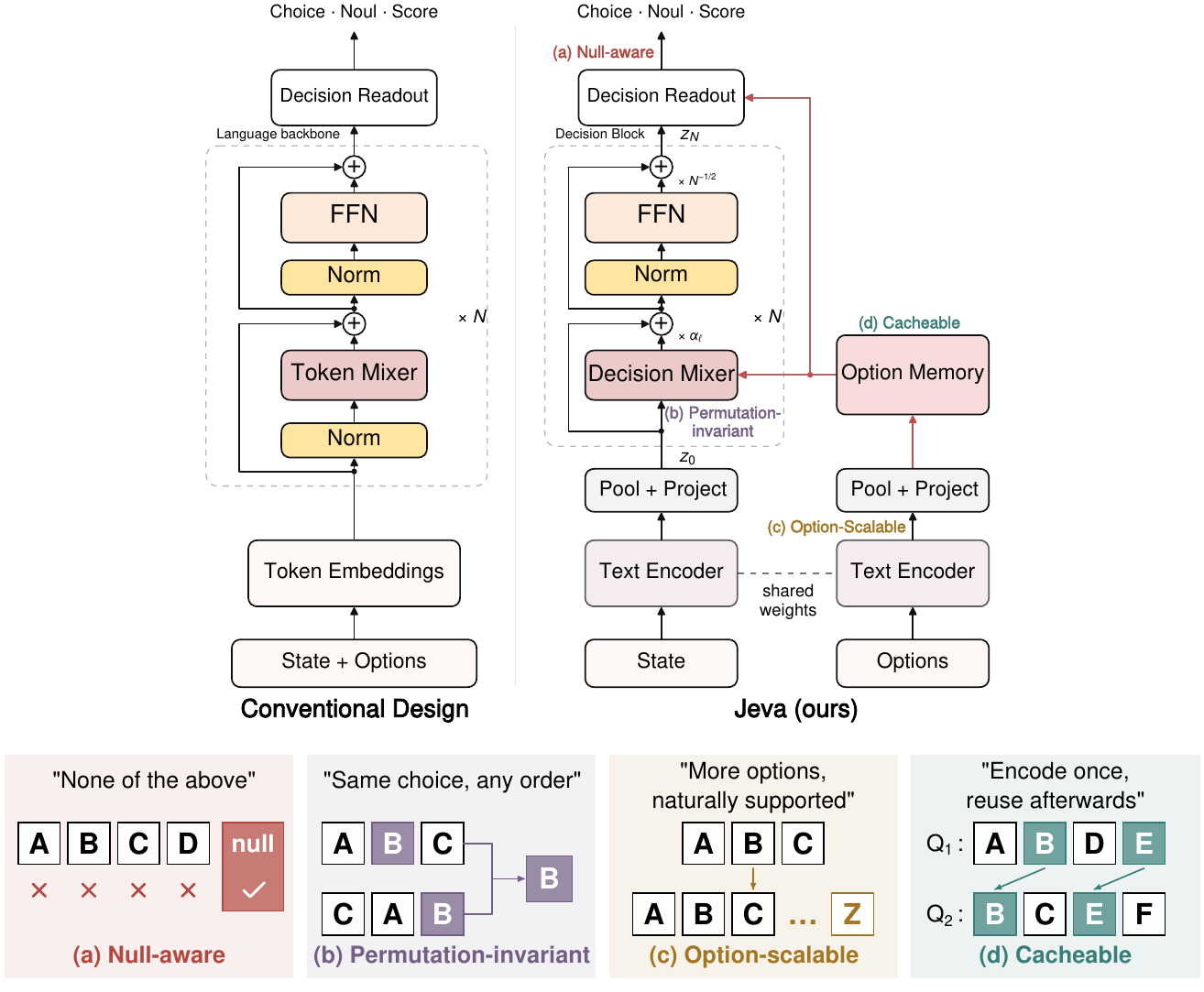}
    \vspace{-3mm}
    \captionof{figure}{%
      \textbf{Top:} a representative joint-encoding system one architecture and Jeva; $N$ denotes the number of blocks in each architecture. \textbf{Bottom:} schematic illustrations of four structural properties: (a) an explicit null outcome when no candidate applies; (b) unchanged choices under candidate reordering; (c) support for variable-size candidate sets; and (d) reuse of option representations across requests.
    }
    \label{fig:overview}
    \vspace{-1mm}
  \end{minipage}
  \vspace{5mm}
  \par\noindent
  \begin{minipage}{\textwidth}
    \begin{abstract}
System one models like Jev enable direct, low-latency decision-making. Yet recent open-source practice of predicting an answer from serialized text does not, by itself, respect the structure of a decision problem. We argue that current designs underemphasize four important structural requirements.
\textbf{(i) null-aware:} Models should distinguish answer absence from uncertainty among candidates;
\textbf{(ii) permutation-invariant:} Reordering alternatives should not change the decision;
\textbf{(iii) option-scalable:} Growing candidate sets should not require redefining the output space;
\textbf{(iv) cacheable:} Request-independent computation over recurring alternatives should be reusable across requests.
We introduce \textbf{Jeva}, a decision-native architecture that addresses these requirements jointly. A shared encoder independently represents the request and each option, while a compact decision stack refines only the request state against immutable option memory. Symmetric aggregation yields permutation-equivariant probabilities; shared scoring accommodates variable-size candidate sets. A separate coverage estimate distinguishes answer absence from candidate competition, enabling explicit null outcomes. Request-independent option representations and layer projections support reuse across newly assembled sets. These mechanisms unify Choice, Noul, and Score outputs.
Across evaluations against Jev and open source models, Jeva combines strong typed-decision performance with near-perfect null detection and no decision flips under any tested permutations. It scales to thousands of candidates and substantially reduces latency through cross-request reuse without changing decisions.
    \end{abstract}
  \end{minipage}
  \par
}{}{%
  \PackageError{jeva}{Unable to insert the teaser figure}{}
}
\makeatother

\makeatletter
\pretocmd{\@maketitle}{\vspace*{-8mm}}{}{%
  \PackageError{jeva}{Unable to adjust title position}{}
}
\makeatother

\maketitle

\twocolumn[{%
  \begin{minipage}{\textwidth}
    \centering
    \includegraphics[width=\linewidth]{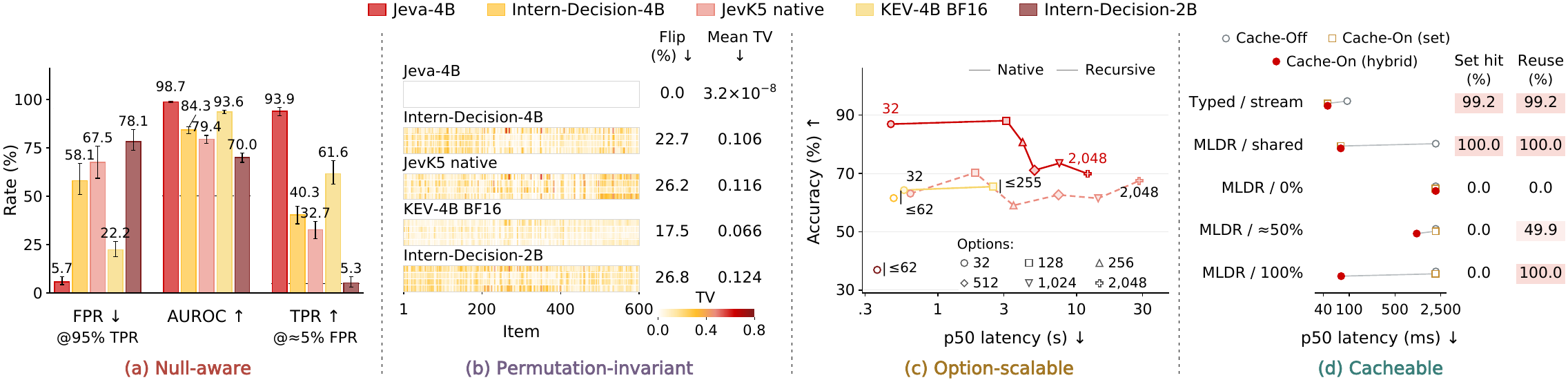}

\captionof{figure}{\textbf{Four structural properties of Jeva.} (a) Null detection on BANKING77~\cite{casanueva2020banking77} with 16 candidates. (b) Candidate-aligned probability changes under Choice permutations from Typed Decisions~\cite{localllama2026typed}, with flip rates and mean total variation (TV). (c) Accuracy--latency trade-offs as candidate counts increase on our ToolRet-based diagnostic~\cite{toolret2025}; solid and dashed lines denote native and recursive selection, respectively. (d) Cache-Off and Cache-On (set-cache, hybrid-cache) latency on Typed Decisions~\cite{localllama2026typed} and MLDR~\cite{chen2024bgem3}, with set-hit rates and effective reuse.}
    \label{fig:structural-properties}
  \end{minipage}
  \par\vspace{2mm}
}]

\section{Introduction}
\label{sec:introduction}
\vspace{-1mm}
Many language-model requests require a compact decision: choose a tool, identify a document, test a condition, or assign a score~\cite{schick2023toolformer,qin2024large,saeidi2018interpretation,zheng2023judging}. System one models like Jev~\citep{almeida2026jev} address these requests directly through typed outputs rather than generated rationales. Such tasks impose structural requirements on the decision process. In candidate selection, reordering the alternatives should preserve the decision, and varying the candidate count should not require a new output space. The model must also recognize when none of the available alternatives applies. Meanwhile, a tool registry, document collection, or label vocabulary can persist across changing requests, creating opportunities to reuse computation and reduce decision latency. Together, these requirements call for a decision-native architecture.

However, many open-source system one models jointly encode requests and candidates as text, then read decisions from answer-token logits or a task-specific prediction head~\citep{jevk5,intern_decision,semif}.
This design places reusable candidate content inside request-specific computation: changing the request or recombining candidates requires processing their joint context again. It also treats an unordered decision set as an ordered sequence, allowing incidental candidate positions to influence predictions. Implementations based on a fixed inventory of answer symbols further restrict candidate counts, while normalizing scores over the supplied candidates does not establish whether any candidate is acceptable. These limitations point to a common architectural issue: semantic representation, candidate interaction, and answer availability are coupled within the same prediction process. A decision-native architecture should separate these roles, preserving context-sensitive judgment while making candidate reuse, order symmetry, and variable-size choice explicit properties of its computation.

We introduce \textbf{Jeva}, a decision-native architecture that separates reusable semantic encoding from set-dependent judgment. In Jeva, as shown in Figure~\ref{fig:overview}, a shared language encoder independently maps the request and each option into compact vectors. A lightweight decision module then refines only the request state by reading an immutable option memory. This lets the state adapt to the candidate set while keeping option representations request-independent. Even a single decision block can reverse the preference between two retained options when the surrounding set changes, capturing interactions that independent candidate scoring cannot express. Symmetric aggregation in the decision mixer and shared candidate scoring in the readout yield permutation-equivariant option probabilities and accommodate variable-size candidate sets without expanding the output parameters. A separate coverage gate allocates probability between null and the candidate set, distinguishing answer absence from uncertainty among alternatives; a conditional distribution resolves competition within the set. Finally, both option vectors and their layer-specific projections can be cached across requests, including newly assembled candidate sets, while state refinement and decision readout remain specific to each request.

We substantiate these design principles through theoretical analysis and empirical comparisons with Jev and other specialized system one models. Our analysis establishes single-block set-dependent expressivity, permutation-equivariant probabilities, and exact reuse of option-side computation, while distinguishing coverage estimation from candidate-score normalization. With a Qwen3.5-4B backbone and one decision block, Jeva combines strong typed-decision performance with nearly perfect null detection and stable choices under all tested candidate permutations. These benefits extend beyond small candidate sets: at 2,048 options, Jeva's native full-set scoring achieves higher accuracy and a speedup of over $2\times$ in median latency over the recursive-selection baseline. Cross-request reuse provides a complementary efficiency gain, delivering a median-latency speedup of over $20\times$ for newly assembled long-document candidate sets without changing decisions. Together, these findings (Figure~\ref{fig:structural-properties}) demonstrate how a decision-native architecture unifies null awareness, permutation invariance, option scalability, and cacheability.
\section{Related Work}
\label{sec:related}

\noindent\textbf{Direct and typed decision models.} Jev provides a unified interface for \textsc{Choice}, \textsc{Noul}, and \textsc{Score} decisions~\citep{almeida2026jev}. Subsequent open-source work explores different readout designs for these typed decisions. SemIf and JevK5 extract option probabilities from answer-token logits without autoregressive decoding~\citep{semif,jevk5}, while Intern-Decision uses decision markers to read distributions over permitted answer symbols~\citep{intern_decision}. ReFlex aggregates predictions across different option orders to reduce sensitivity to candidate position~\citep{reflex}. Beyond token-logit readouts, KEV uses a pointer head over contextualized option positions~\citep{palmer2026kev4b}, and Laya combines a bidirectional language encoder with a dedicated decision head~\citep{laya}. These methods derive decisions from request and candidate token representations. Jeva separates independent text encoding from compact, set-conditioned state updates, making candidate computation reusable while retaining interactions among alternatives.

\noindent\textbf{Independent encoding and late interaction.} Separating query and candidate encoding enables reusable representations and efficient matching. Dense Passage Retrieval encodes questions and passages independently, allowing corpus representations to be precomputed~\citep{karpukhin2020dpr}. ColBERT retains token-level representations to support fine-grained late interaction~\citep{khattab2020colbert}. Contrastive Language Models extend independent encoding and candidate caching to typed Choice, Noul, and Score decisions~\citep{clm}. Jeva refines the request state against the full candidate set before scoring and coverage estimation, enabling set-dependent preferences while keeping candidate vectors and layer projections reusable.

\begin{figure}[t]
  \centering
  \includegraphics[width=\linewidth]{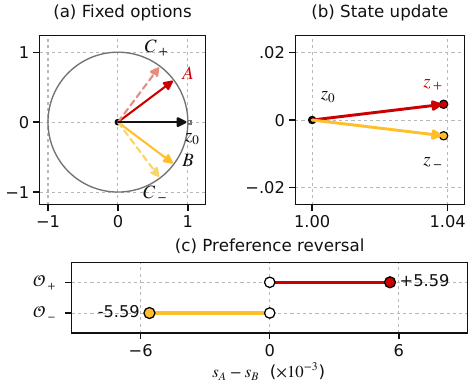}
  \vspace{-8mm}
  \caption{\textbf{Fixed options, set-dependent decisions.}
  Replacing $C_+$ by $C_-$ changes the refined state and reverses
  the $A$--$B$ margin. Hollow markers show the independent-score tie.
  The state zoom uses equal scales; values follow
  Proposition~\ref{prop:jeva-setcontext}.}
  \vspace{-6mm}
  \label{fig:jeva-context-mechanism}
\end{figure}

\noindent\textbf{Set functions and latent attention.} Deep Sets establishes shared transformations and symmetric aggregation as a foundation for permutation-invariant set modeling~\citep{zaheer2017deepsets}. Set Transformer introduces attention-based interactions among set elements~\citep{lee2019settransformer}, while Perceiver processes large inputs through a compact latent representation~\citep{jaegle2021perceiver}. 
Jeva refines a request-conditioned state against immutable option memory, combining set-dependent judgment with reusable representations. Symmetric aggregation and shared scoring yield permutation-equivariant probabilities and support variable-size sets without changing parameter shapes. At fixed width and depth, refinement and readout scale linearly with candidate count over prepared memory.

\noindent\textbf{Null awareness and calibration.} Answerability and abstention address related but distinct questions. SQuAD~2.0 evaluates answerability~\citep{rajpurkar2018squad2}, while selective classification permits abstention to control prediction risk~\citep{geifman2017selective}. Jeva defines null as the absence of a valid candidate. Its coverage gate distinguishes the absence of a valid candidate from uncertainty among alternatives. Post-training temperature scaling provides complementary probability calibration~\citep{guo2017calibration}.

\section{Jeva: A Native Decision Architecture}
\label{sec:method}
\begin{figure}[t]
  \centering
  \includegraphics[width=0.9\linewidth]{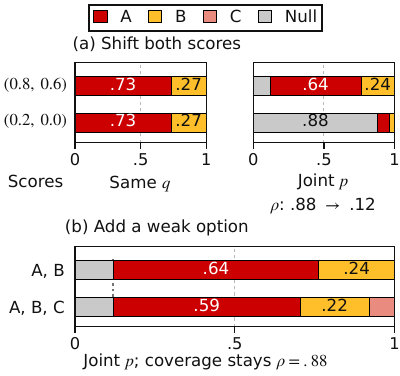}
  \caption{\textbf{Competition differs from coverage.}
  (a) A score shift preserves $q$ but changes $\rho$. (b) Adding $s_C=0.4$ at fixed scores preserves $\rho$ but redistributes $p$. Here $\tau_{\mathrm{dec}}=0.20$, $\tau_{\mathrm{cov}}=0.15$, and $b_{\mathrm{cov}}=0.50$.}
  \label{fig:jeva-coverage-mechanism}
  \vspace{-5mm}
\end{figure}

Jeva independently encodes the request and each option with a shared backbone (Figure~\ref{fig:overview}). A compact decision module refines only the request state against immutable option memory, enabling reuse while retaining set-dependent scoring.

\subsection{Independent Encoding, Collective Decisions}
\label{sec:jeva-encoding}

A request comprises state $x$, question $\iota$, decision type $t$, and candidates $\mathcal O=(o_1,\ldots,o_K)$, with $K\geq2$ in the typed interface. We format inputs with two text templates: $T_x(t,\iota,x)$ combines the decision type, question, and state, while $T_o(t,o_i)$ contains only the decision type and candidate text. Let $E_\theta$ be the language backbone followed by last-nonpadding-token pooling. Bias-free projections give
\begin{equation}
 \begin{aligned}
 z_0&=P_xE_\theta(T_x(t,\iota,x)),\\
 u_i&=\operatorname{nrm}\!\left(P_oE_\theta(T_o(t,o_i))\right),
 \end{aligned}
 \label{eq:jeva-encoding}
\end{equation}
where $\operatorname{nrm}$ is stabilized $\ell_2$ normalization and $z_0,u_i\in\mathbb R^d$. Templates identify each input's type and role; $T_o$ contains no current question, state, candidate index, or other option. Each text is contextualized by the backbone before pooling. The rows of $U=[u_1^\top;\ldots;u_K^\top]$ stay fixed within a forward pass, while the encoder and projections remain trainable.

\noindent\textbf{Why refine the state?}
Let $q$ denote the distribution over candidates conditional on an acceptable candidate being present.  For independent scores $g(x,\iota,o_i)$ normalized by softmax with temperature $\tau>0$, its log odds satisfy
\begin{equation}
 \log\frac{q_i(\mathcal O)}{q_j(\mathcal O)}
 =\frac{g(x,\iota,o_i)-g(x,\iota,o_j)}{\tau}.
 \label{eq:jeva-independent-odds}
\end{equation}
The shared denominator cancels in the ratio: adding or removing other candidates can change probabilities but cannot reverse the preference between two retained options. Jeva instead conditions the scoring state on the candidate set, enabling such reversals through compact per-request computation while keeping option representations reusable.

\begin{figure}[t]
  \centering
  \includegraphics[width=\linewidth]{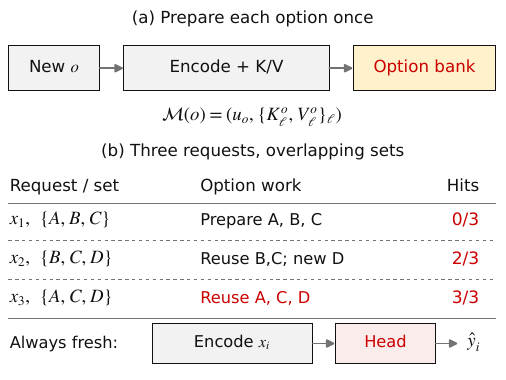}
  \caption{\textbf{Reuse across new requests and sets.}
  With a fixed model, type, and encoding policy, an initially empty
  option bank gives $0/3$, $2/3$, and $3/3$ hits without eviction.
  Every whole-set lookup misses; state encoding and decisions remain fresh.}
  \label{fig:jeva-reuse-mechanism}
  \vspace{-5mm}
\end{figure}

\subsection{Candidate-Conditioned State Refinement}
\label{sec:jeva-mixer}

\begin{table*}[t]
\centering
\fontsize{9}{10.8}\selectfont
\setlength{\tabcolsep}{4.5pt}
\renewcommand{\arraystretch}{1.08}
\setlength{\heavyrulewidth}{0.65pt}
\setlength{\lightrulewidth}{0.4pt}
\JevaFitTable{%
\begin{tabular}{@{}lrrrrrrrr@{}}
\toprule\arrayrulecolor{black}
 & Typed & \multicolumn{2}{c}{BANKING77} & \multicolumn{2}{c}{MASSIVE} & \multicolumn{3}{c}{Mind2Web} \\
\cmidrule(lr){2-2}\cmidrule(lr){3-4}\cmidrule(lr){5-6}\cmidrule(l){7-9}
Model & Test & $K=8$ & $K=16$ & $K=8$ & $K=16$ & Task & Website & Domain \\
\midrule
\JevaModelKey{7F1D1D}Jev-1.13.0 & 74.10 & \JevaRankTwo{86.14} & \JevaRankTwo{82.50} & \JevaRankTwo{84.63} & \JevaRankTwo{81.61} & \JevaRankTwo{66.26} & \JevaRankTwo{59.97} & \JevaRankTwo{62.47} \\
\hline
\JevaModelKey{EA8B7F}JevK5 & 62.30 & 78.18 & 69.03 & 80.16 & 69.70 & 51.47 & 43.30 & 45.60 \\
\JevaModelKey{FFBF2B}InternDecision-4B & \JevaRankOne{78.70} & 80.78 & 75.03 & \JevaRankThree{81.20} & \JevaRankThree{75.79} & 48.02 & 41.17 & 43.61 \\
\JevaModelKey{FFBF2B}InternDecision-2B & \JevaRankThree{75.85} & 48.31 & 35.10 & 45.33 & 41.69 & 29.43 & 27.47 & 26.12 \\
\JevaModelKey{FFBF2B}InternDecision-0.8B & 69.35 & 29.74 & 22.63 & 28.58 & 24.61 & 25.33 & 22.53 & 21.48 \\
\JevaModelKey{7F1D1D}Re\textperiodcentered Flex-4B & 58.85 & 78.77 & 74.38 & 76.63 & 71.39 & \JevaRankThree{52.08} & \JevaRankThree{45.05} & \JevaRankThree{47.64} \\
\JevaModelKey{7F1D1D}SemiF-4B & 62.75 & 69.97 & 67.18 & 69.70 & 67.38 & 48.60$^\dagger$ & 41.16$^\dagger$ & 43.67$^\dagger$ \\
\JevaModelKey{F6D56A}KEV-4B (BF16) & 66.00 & \JevaRankThree{83.47} & \JevaRankThree{82.47} & 76.60 & 73.54 & 40.32 & 34.17 & 35.62 \\
\JevaModelKey{F6D56A}KEV-4B (FP32) & 66.05 & 83.44 & 82.34 & 76.53 & 73.64 & 40.53 & 34.25 & 35.81 \\
\JevaModelKey{7F1D1D}CLM-8B & 43.05 & 26.95 & 18.15 & 33.36 & 23.23 & 21.73 & 21.16 & 19.79 \\
\JevaModelKey{7F1D1D}Valen-Preview-2B & 37.75 & 19.61 & 19.09 & 19.60 & 21.72 & 13.27 & 14.16 & 14.38 \\
\JevaModelKey{7F1D1D}Laya-English & 34.50 & 58.05 & 50.49 & 44.69 & 37.63 & 15.96 & 11.34 & 15.26 \\
\JevaModelKey{7F1D1D}Qwen3.5-4B label LP & 50.85 & 49.84 & 43.12 & 47.31 & 41.63 & 18.14 & 15.30 & 18.34 \\
\JevaModelKey{7F1D1D}Qwen3.5-2B label LP & 41.70 & 48.25 & 35.65 & 30.93 & 25.25 & 12.46 & 10.73 & 11.47 \\
\midrule
\rowcolor{JevaRowColor}
\JevaModelKey{CC0000}\textbf{Jeva-4B} & \JevaRankTwo{78.30} & \JevaRankOne{92.05} & \JevaRankOne{91.62} & \JevaRankOne{91.09} & \JevaRankOne{90.22} & \JevaRankOne{79.99} & \JevaRankOne{62.25} & \JevaRankOne{66.23} \\
\bottomrule\arrayrulecolor{black}
\end{tabular}}
\par\smallskip
\vspace{-4mm}
\caption{\textbf{First-argmax accuracy (\%, higher is better), including null outcomes where applicable.} Mind2Web uses $K=16$. $^\dagger$Supported-only accuracy; coverage is 99.2\%, 99.8\%, and 97.3\% respectively; all other entries have full coverage. Dark-to-light gold marks the top three distinct full-coverage scores per column.}
\label{tab:general-accuracy}
\vspace{-5mm}
\end{table*}
Each of $L$ blocks reads $U$ through $H$ heads. For head $h$, define $q_\ell^h=\operatorname{nrm}([W_\ell^Qz_\ell]_h)$, $k_{\ell i}^h=\operatorname{nrm}([W_\ell^Ku_i]_h)$, and $v_{\ell i}^h=[W_\ell^Vu_i]_h$. The candidate-weighted message is
\begin{equation}
 \begin{aligned}
 a_{\ell i}^h
 &=\frac{\exp((q_\ell^h)^\top k_{\ell i}^h/\tau_\ell)}
 {\sum_j\exp((q_\ell^h)^\top k_{\ell j}^h/\tau_\ell)},\\
 m_\ell&=W_\ell^O\operatorname{Concat}_{h=1}^{H}
 \left(\sum_i a_{\ell i}^h v_{\ell i}^h\right).
 \end{aligned}
 \label{eq:jeva-mixer}
\end{equation}
Padding is masked. For compatibility logits $c_i=(q_\ell^h)^\top k_{\ell i}^h$, these weights uniquely maximize $a^\top c+\tau_\ell\mathcal H(a)$ over the probability simplex, yielding an entropy-regularized allocation of candidate evidence for state refinement. A derivation is provided in Appendix~\ref{app:jeva-entropy}. The state undergoes two residual updates:
\begin{equation}
 \begin{aligned}
 \widetilde z_\ell&=z_\ell+\alpha_\ell m_\ell,\\
 z_{\ell+1}&=\widetilde z_\ell+
 L^{-1/2}F_\ell(\operatorname{RMSNorm}(\widetilde z_\ell)),
 \end{aligned}
 \label{eq:jeva-update}
\end{equation}
where $F_\ell$ is a bias-free SwiGLU network and $\alpha_\ell=\tau_\ell\sigma(\gamma_\ell)/\sqrt L$. This coupling ensures $0<\alpha_\ell/\tau_\ell<1/\sqrt{L}$. All temperatures are learned positive scalars. Only $z_\ell$ evolves; keys and values depend solely on the unchanged options.

\begin{proposition}[Set-dependent preference with one block]
\label{prop:jeva-setcontext}
There exist fixed Jeva parameters and two candidate sets containing the same options $A,B$ such that, for the same request, a single block uniquely selects $A$ in one set and $B$ in the other. No independent scorer in Eq.~\eqref{eq:jeva-independent-odds} with set-independent tie-breaking can realize both choices.
\end{proposition}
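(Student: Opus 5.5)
The plan is an explicit low-dimensional construction followed by a short impossibility argument. The readout is not yet specified at this point in the paper, so I will only assume what the statement needs: after the block, each candidate receives a shared score $s_i=\langle z_1, u_i\rangle/\tau_{\mathrm{dec}}$, possibly after a fixed bias-free map. I will use one head ($H=1$), $L=1$, and take $W^Q,W^K,W^V,W^O$ to be coordinate projections or identities on a small $d$ (say $d=3$). I will also set $F_1\equiv 0$, for instance by zeroing its output projection, which is allowed because $F_\ell$ is bias-free. Then $z_1=z_0+\alpha_1 m_1$ with $m_1=\sum_i a_i v_i$ a convex combination of option values.

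\textbf{Construction.} Take the request state $z_0=e_3$ and options $A=u_A=e_1$, $B=u_B=e_2$. Both have the same inner product with $z_0$, namely $0$, which gives the independent-score tie shown by the hollow markers in the figure. Add a context option $C_+$ and a second context option $C_-$, with unit vectors chosen as follows. $C_+$ has a positive $e_3$ component, so its key aligns with $q=\operatorname{nrm}(z_0)$, and its value leans toward $e_1$. $C_-$ is the mirror image, leaning toward $e_2$. Concretely, $u_{C_\pm}=\operatorname{nrm}(\pm\beta(e_1-e_2)+e_3)$. The keys are $W^K=I$ followed by normalization, so $C_\pm$ receive equal attention weights, and by symmetry $A$ and $B$ do too. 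The sign of the value then determines the refinement.

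\textbf{Computation.} In set $\{A,B,C_+\}$ the attention weights are $a_A=a_B$ and $a_{C_+}>0$. Hence $m_1$ has a $(e_1-e_2)$ component equal to $a_{C_+}\beta/\sqrt{1+2\beta^2}>0$, and the contributions of $A$ and $B$ cancel in $e_1-e_2$. Any $\alpha_1>0$ allowed by the coupling $0<\alpha_1/\tau_1<1/\sqrt L$ therefore gives $\langle z_1,u_A-u_B\rangle>0$. The same calculation in $\{A,B,C_-\}$ gives the opposite sign. I must also make $A$ (respectively $B$) the \emph{unique} argmax over the whole set, including $C_\pm$. If needed, I will adjust the scoring so that $C_\pm$ are strictly dominated, for example by making $\beta$ small so that $u_{C_\pm}\approx e_3$. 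I will then add a fixed negative component along a fourth coordinate $e_4$ that appears only in $C_\pm$, and route the $e_3$ part of $z$ through $W^Q$ and not into the scored state. Pinning down these concrete numbers for strict dominance is the main obstacle, so I will choose explicit values (e.g.\ $\tau_1=1$, $\gamma$ so that $\alpha_1=1/2$) and verify the strict inequalities numerically. This should also reproduce the values in Figure~\ref{fig:jeva-context-mechanism}.

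\textbf{Impossibility for independent scorers.} Suppose an independent scorer $g$ chooses $A$ in the first set and $B$ in the second. By Eq.~\eqref{eq:jeva-independent-odds}, the sign of $\log(q_A/q_B)$ equals the sign of $g(x,\iota,A)-g(x,\iota,B)$, which is the same in both sets. If the difference is nonzero, the same option is preferred in both sets, a contradiction. If it is zero, there is a tie, which a set-independent tie-break resolves identically in both sets, again a contradiction. One subtlety is that the choice in each set is the argmax over all candidates, not only $A$ and $B$. Selecting $A$ in one set nonetheless requires $q_A\ge q_B$ there, with strictness or a tie-break win, and symmetrically for $B$ in the other set. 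The pairwise argument therefore suffices.
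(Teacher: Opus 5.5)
\textbf{Gap: the existence half of your proposal does not yet prove uniqueness.} Your impossibility half is fine and matches the paper's argument (the sign of the pairwise odds plus a set-independent tie-break). The problem is that your explicit construction does not make $A$ or $B$ the unique selection.

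Jeva's readout is the plain dot product $s_i=z_L^\top u_i$, so there is no extra readout map with which to ``adjust the scoring.'' Also, the residual carries all of $z_0$ into $z_1=z_0+\alpha_1 m_1$. With identity projections, the alignment with $z_0=e_3$ that makes $C_\pm$ strongly attended also makes it strongly scored. Writing $N=\sqrt{1+2\beta^2}$, you get $s_{C_\pm}=1/N+\alpha_1 a_C$ versus $\max(s_A,s_B)=\alpha_1(a_A+a_C\beta/N)$. For $\beta=1$ and your values $\tau_1=1$, $\alpha_1=1/2$, this gives $s_{C_+}\approx0.81$ against $s_A\approx0.27$. The block therefore selects $C_+$ in one set and $C_-$ in the other.

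Shrinking $\beta$, as you suggest, makes this worse: $s_{C_\pm}\to1+\alpha_1a_C$ while the $A$--$B$ margin vanishes like $\beta$. Your family is in fact salvageable in the opposite direction, with $\beta$ large, but you neither identify nor check this. The $e_4$/rerouting patch is only sketched. It can be realized only by giving $z_0$ a query-driving component orthogonal to every $u_i$, and you defer all the strict inequalities to a numerical check you have not done. Since the statement is an existence claim, that verification is the proof.

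\textbf{Missing idea: $C_\pm$ needs no large attention weight.} Softmax weights are strictly positive, so any asymmetric value component of $C_\pm$ tilts the state. The paper therefore places $C_\pm$ \emph{below} $A,B$. It takes $z_0=(1,0)$, $A,B=(4,\pm3)/5$, $C_\pm=(3,\pm4)/5$, $\tau=3/20$ and $\gamma=-\log2$, so $\alpha=1/20$. This gives $s_A-s_B=\pm24\alpha w/25$. Because $\|\alpha m\|\le\alpha$, no pairwise margin moves by more than $2\alpha=1/10$, which is less than the initial lead $1/5$ of $A,B$ over $C_\pm$.

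In your own coordinates, flipping one sign suffices: take $u_{C_\pm}=\operatorname{nrm}(\pm\beta(e_1-e_2)-e_3)$. Then $a_C<a_A=a_B$, you still have $s_A-s_B=\pm2\alpha_1a_C\beta/N$, and $s_{C_\pm}=-1/N+\alpha_1a_C<\alpha_1(a_A+a_C\beta/N)$ for every $\alpha_1,\beta>0$. The winners are then unique with identity projections in $d=3$ and no auxiliary coordinates.
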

\begin{proof}
Use one two-dimensional head, identity projections, and zero FFN output. Score each candidate by $s_i=z_1^\top u_i$. Let $z_0=(1,0)$, $A=(4,3)/5$, $B=(4,-3)/5$, and $C_\pm=(3,\pm4)/5$. Compare sets $(A,B,C_+)$ and $(A,B,C_-)$. All vectors are unit length. With $\tau=3/20$ and $\gamma=-\log2$, the residual scale is $\alpha=1/20$; the third option receives weight $w=(1+2e^{4/3})^{-1}$. The score differences are
\begin{equation}
 s_A(\mathcal O_\pm)-s_B(\mathcal O_\pm)
 =\pm24\alpha w/25.
 \label{eq:jeva-context-main}
\end{equation}
The winning $A$ or $B$ also exceeds $C_\pm$: its initial advantage $1/5$ can shrink by at most $2\alpha<1/5$. The winners are therefore unique, whereas an independent scorer cannot reverse their ordering by Eq.~\eqref{eq:jeva-independent-odds}.
\end{proof}
Figure~\ref{fig:jeva-context-mechanism} shows this construction; Appendix~\ref{app:jeva-context} embeds it in the deployed width and head count. Thus reusable features do not imply independent decisions. The mixer forms the message directly, without storing a $K\times K$ interaction matrix.

\subsection{Competition, Coverage, and Typed Outputs}
\label{sec:jeva-readout}

\begin{table*}[t]
\centering
\fontsize{9}{10.8}\selectfont
\setlength{\tabcolsep}{4.5pt}
\renewcommand{\arraystretch}{1.08}
\setlength{\heavyrulewidth}{0.65pt}
\setlength{\lightrulewidth}{0.4pt}
\arrayrulecolor{black}

\JevaFitTable{%
\begin{tabular}{@{}lrrrrrrr@{}}
\toprule
 & \multicolumn{2}{c}{Noul $\downarrow$}
 & \multicolumn{2}{c}{Score $\downarrow$}
 & \multicolumn{3}{c}{Choice permutations} \\
\cmidrule(lr){2-3}\cmidrule(lr){4-5}\cmidrule(l){6-8}

Model & Brier & MAE & MAE & RMSE
& Flip (\%) $\downarrow$
& All-order (\%) $\uparrow$
& Mean TV $\downarrow$ \\
\midrule

\JevaModelKey{7F1D1D}Jev-1.13.0
& \JevaRankThree{0.0342}
& \JevaRankThree{0.1456}
& 0.3744 & 0.4795
& \JevaRankTwo{17.00}
& \JevaRankTwo{66.67}
& 0.0858 \\
\hline

\JevaModelKey{EA8B7F}JevK5
& 0.0987 & 0.2448 & 0.4370 & 0.5935
& 26.17 & 47.83 & 0.116 \\

\JevaModelKey{FFBF2B}InternDecision-4B
& \JevaRankTwo{0.0340}
& \JevaRankTwo{0.1431}
& \JevaRankTwo{0.2488}
& \JevaRankOne{0.3208}
& 22.67 & \JevaRankThree{64.50} & 0.106 \\

\JevaModelKey{FFBF2B}InternDecision-2B
& 0.0440
& 0.1649
& \JevaRankThree{0.2641}
& \JevaRankThree{0.3433}
& 26.83 & 60.33 & 0.124 \\

\JevaModelKey{FFBF2B}InternDecision-0.8B
& 0.0503 & 0.1835 & 0.3145 & 0.4013
& 45.00 & 45.33 & 0.135 \\

\JevaModelKey{7F1D1D}Re\textperiodcentered Flex-4B
& 0.1768 & 0.3464 & 0.4795 & 0.6490
& 26.50 & 54.67 & 0.116 \\

\JevaModelKey{7F1D1D}SemiF-4B
& 0.0913 & 0.2439 & 0.4722 & 0.6134
& 32.33 & 50.33 & 0.165 \\

\JevaModelKey{F6D56A}KEV-4B (BF16)
& 0.0888 & 0.2282 & 0.3987 & 0.5669
& 17.50
& 55.50 & \JevaRankThree{0.0658} \\

\JevaModelKey{F6D56A}KEV-4B (FP32)
& 0.0887 & 0.2283 & 0.3989 & 0.5675
& \JevaRankThree{17.17}
& 55.83 & \JevaRankTwo{0.0657} \\

\JevaModelKey{7F1D1D}Valen-Preview-2B
& 0.1509 & 0.3348 & 0.7786 & 0.9274
& 31.17 & 24.17 & 0.139 \\

\JevaModelKey{7F1D1D}Laya-English
& 0.1514 & 0.3206 & 0.7083 & 0.8383
& 66.83 & 11.83 & 0.139 \\

\JevaModelKey{7F1D1D}Qwen3.5-4B label LP
& 0.1623 & 0.3213 & 0.8095 & 1.0235
& 84.50 & 14.17 & 0.414 \\

\JevaModelKey{7F1D1D}Qwen3.5-2B label LP
& 0.1411 & 0.3224 & 0.6221 & 0.7730
& 76.83 & 18.17 & 0.199 \\

\midrule
\rowcolor{JevaRowColor}
\JevaModelKey{CC0000}\textbf{Jeva-4B}
& \JevaRankOne{0.0184}
& \JevaRankOne{0.0939}
& \JevaRankOne{0.2315}
& \JevaRankTwo{0.3222}
& \JevaRankOne{0.00}
& \JevaRankOne{76.67}
& \JevaRankOne{3.17e-08} \\
\bottomrule\arrayrulecolor{black}
\end{tabular}%
}
\vspace{-2mm}
\caption{
\textbf{Typed fidelity and order stability.} Flip measures any decision change; all-order accuracy requires correctness under every tested order. Mean TV compares candidate-aligned probability distributions.}
\label{tab:typed-stability}
\vspace{-5mm}
\end{table*}
Jeva computes final scores $s_i=z_L^\top u_i$ and conditional candidate probabilities $q=\softmax(s/\tau_{\mathrm{dec}})$. Neither the initial nor the final state is unit-normalized, retaining magnitude information for confidence estimation. A common score shift leaves $q$ unchanged: relative preference alone cannot establish whether any candidate is acceptable. To form a full distribution $p$ over candidates and null ($\bot$), Jeva separately estimates the probability $\rho$ that at least one acceptable candidate is present:
\begin{equation}
 \begin{gathered}
 \rho=\sigma\!\left(\frac{\max_i s_i-b_{\mathrm{cov}}}
 {\tau_{\mathrm{cov}}}\right),\\
 p_\bot=1-\rho,\qquad p_i=\rho q_i.
 \end{gathered}
 \label{eq:jeva-coverage}
\end{equation}
Here, $b_{\mathrm{cov}}$ is a learned coverage threshold and $\tau_{\mathrm{cov}}>0$ is a learned temperature. Writing $Y$ for the decision outcome, this implements $P(Y=i)=P(Y\ne\bot)P(Y=i\mid Y\ne\bot)$. Closed-set requests use $q$; open-set requests use $p$.
\begin{proposition}[Coverage without a cardinality denominator]
\label{prop:jeva-coverage}
At fixed final scores, adding options below the existing maximum leaves
$\rho$ unchanged. In contrast, a flat softmax with null score $b_0$ and
$K$ identical candidate scores $s$ assigns
\begin{equation}
 p_\bot^{\mathrm{flat}}
 =\sigma\!\left((b_0-s)/\tau-\log K\right).
 \label{eq:jeva-null-cardinality-main}
\end{equation}
\end{proposition}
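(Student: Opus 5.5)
The proof has two independent parts, and both are direct calculations from the definitions; I will treat them in the order stated.

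\textbf{Invariance of $\rho$.} By Eq.~\eqref{eq:jeva-coverage}, $\rho$ depends on the candidate set only through $M=\max_i s_i$. Suppose the final scores $s_1,\ldots,s_K$ are held fixed and new options with scores $s_{K+1},\ldots,s_{K'}$ are added, each satisfying $s_j\le M$. Then $\max_{i\le K'} s_i = M$, since the old maximizer is still present and no new score exceeds it. Hence the argument of $\sigma$, namely $(M-b_{\mathrm{cov}})/\tau_{\mathrm{cov}}$, is unchanged, and so is $\rho$. No sum over candidates enters $\rho$, so $K$ never appears.

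The only care needed is the meaning of ``at fixed final scores.'' In the full model, adding options changes $z_L$ through the mixer, so the existing scores would themselves shift. I will therefore state explicitly that the claim concerns the readout map from scores to $\rho$, with the existing scores held fixed, as the proposition says.

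\textbf{The flat softmax formula.} The flat model places the null score $b_0$ and the $K$ candidate scores in one softmax at temperature $\tau$. With all candidate scores equal to $s$,
\[
p_\bot^{\mathrm{flat}}=\frac{e^{b_0/\tau}}{e^{b_0/\tau}+Ke^{s/\tau}}.
\]
Dividing numerator and denominator by $e^{b_0/\tau}$ gives $1/(1+e^{-((b_0-s)/\tau-\log K)})$, since $K e^{s/\tau}/e^{b_0/\tau}=e^{(s-b_0)/\tau+\log K}$. By the definition of $\sigma$, this is exactly Eq.~\eqref{eq:jeva-null-cardinality-main}.

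To finish, I will note the contrast the proposition is making. The flat null probability decreases in $K$ through the $-\log K$ term even when no candidate improves. By contrast, $\rho$, and hence $p_\bot=1-\rho$, is unaffected by adding duplicates of the maximizing score or options scoring below it.
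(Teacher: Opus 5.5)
Your proposal is correct and follows essentially the same route as the paper. In both, $\rho$ depends on the set only through $\max_i s_i$, which additions scoring no higher than the existing maximum leave unchanged, and the flat-softmax formula follows by dividing numerator and denominator by $e^{b_0/\tau}$ to obtain $1/(1+Ke^{(s-b_0)/\tau})$. Your caveat that the invariance concerns the score-to-$\rho$ readout with scores held fixed matches the paper's own remark that set refinement can still move the scores.
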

\begin{proof}
Adding scores below the existing maximum leaves $\rho$ unchanged. For flat softmax, dividing numerator and denominator by $e^{b_0/\tau}$ gives $p_\bot^{\mathrm{flat}}=1/(1+K e^{(s-b_0)/\tau})$, equivalent to Eq.~\eqref{eq:jeva-null-cardinality-main}.
\end{proof}
Figure~\ref{fig:jeva-coverage-mechanism} illustrates these effects through controlled changes to the final scores. Set refinement may still change scores, and the joint argmax over $p$ need not remain fixed when candidates are added (Appendix~\ref{app:jeva-coverage-proof}).

The same computation supports \textsc{Choice} through candidate selection, \textsc{Noul} through the positive Boolean candidate's probability $q_{i^+}$, and \textsc{Score} through $\sum_iq_iv_i$ for numeric rubric values $v_i$. Rubric values and the positive designation determine output interpretation, not extra encoder inputs. Separate temperatures govern evidence aggregation, competition, and coverage.

\subsection{Symmetry, Reuse, and Scaling}
\label{sec:jeva-symmetry}

An atomic option memory stores both its semantic vector and projected views:
\begin{equation}
 \mathcal M_t(o_i)=\left(u_i,
 \{k_{\ell i}^{h},v_{\ell i}^{h}\}_{\ell,h}\right).
 \label{eq:jeva-memory}
\end{equation}
Requests gather these entries in their current candidate order. Our set cache stores assembled candidate tuples; the hybrid cache additionally reuses individual options across newly assembled sets. State encoding and decision refinement still execute for every request. Figure~\ref{fig:jeva-reuse-mechanism} illustrates this distinction.

\begin{proposition}[Candidate symmetry and exact reuse]
\label{prop:jeva-symmetry}
With fixed weights and deterministic exact-arithmetic evaluation, permuting candidates and their metadata permutes candidate probabilities but leaves $z_L$, null probability, and typed Noul/Score outputs unchanged. Replacing option computations by identical cached entries also leaves all outputs unchanged.
\end{proposition}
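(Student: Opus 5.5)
The plan is an induction over blocks, followed by a readout argument. Fix a permutation $\pi$ of $\{1,\ldots,K\}$ and write $\mathcal O^\pi=(o_{\pi(1)},\ldots,o_{\pi(K)})$. The rest of each option's metadata (padding mask, rubric value $v_i$, positive designation $i^+$) is permuted along with it.

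\textbf{Encoding step.} By Eq.~\eqref{eq:jeva-encoding}, $z_0$ depends only on $T_x(t,\iota,x)$, which contains no candidate. Each $u_i$ depends only on $T_o(t,o_i)$, which contains no index or other option. Hence $z_0$ is unchanged under $\pi$, and $U$ is replaced by its row permutation $U^\pi$ with rows $u_{\pi(i)}$.

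\textbf{Inductive step.} The invariant is: $z_\ell$ is identical for $\mathcal O$ and $\mathcal O^\pi$. It holds at $\ell=0$. Assuming it at block $\ell$:
\begin{itemize}
\item The query $q_\ell^h$ is unchanged, since it depends only on $z_\ell$.
\item Keys and values are computed row-wise by shared maps, so they are permuted.
\item The compatibility logits are therefore permuted, and softmax is permutation-equivariant. The weights $a_{\ell i}^h$ thus follow the options. A masked padding slot has zero weight wherever it sits.
\item The message $\sum_i a_{\ell i}^h v_{\ell i}^h$ is a sum of the same multiset of terms, so it is unchanged. In exact arithmetic, addition is commutative and associative.
\item The concatenation, $W_\ell^O$, the two residual updates in Eq.~\eqref{eq:jeva-update}, RMSNorm and $F_\ell$ act only on request-side quantities, so $z_{\ell+1}$ is unchanged.
\end{itemize}
This gives $z_L$ invariant.

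\textbf{Readout.} Since $s_i=z_L^\top u_i$ with $z_L$ fixed, the score vector is permuted. Then $q=\softmax(s/\tau_{\mathrm{dec}})$ is permuted, while $\max_i s_i$ is unchanged, so $\rho$ and $p_\bot$ are invariant by Eq.~\eqref{eq:jeva-coverage}. Consequently $p_i=\rho q_i$ is permuted.

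For the typed outputs:
\begin{itemize}
\item \textsc{Noul} reads $q_{i^+}$. Because the positive designation is permuted together with its option, it indexes the same option, and the value is unchanged.
\item \textsc{Score} is $\sum_i q_i v_i$. The rubric values travel with their options, so this is again a reordered sum of identical terms.
\end{itemize}

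\textbf{Exact reuse.} By Eq.~\eqref{eq:jeva-memory}, the forward pass consumes each option only through $u_i$ and $\{k_{\ell i}^h,v_{\ell i}^h\}$. These are deterministic functions of $(t,o_i)$ and the fixed weights; no request quantity enters $T_o$, and $z_\ell$ never feeds back into keys or values. A cached entry built under the same model, type and encoding policy is therefore identical to a fresh computation. Substituting identical inputs into the same deterministic computation yields identical outputs. This covers both whole-tuple set-cache hits and hybrid per-option gathers, because gathering entries in the current order reproduces exactly the $U$ and key/value arrays a fresh pass would build.

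\textbf{Main obstacle.} The delicate part is making the hypotheses precise rather than the algebra. The argument needs:
\begin{itemize}
\item exact arithmetic, so that reordered summations in the softmax denominators and weighted sums agree;
\item masking that assigns padding zero weight independent of position;
\item metadata moving with its options.
\end{itemize}
I would state these explicitly and note that the argmax choice may still be permuted only up to ties, which is why the claim is phrased in terms of probabilities.
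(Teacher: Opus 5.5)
Your proposal is correct and follows essentially the same route as the paper. Both proofs argue by induction over blocks that the attention message is only a reindexed sum, so $z_L$ is invariant; then the scores permute while their maximum is invariant, which fixes coverage and the typed Noul/Score readouts; and exact reuse holds because the option memory in Eq.~\eqref{eq:jeva-memory} contains no request or other-candidate information, with your explicit hypotheses (exact arithmetic, position-independent masking, metadata moving with options, ties affecting only the argmax) matching the caveats the paper places in its appendix.
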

\begin{proof}
Reordering keys and values only reindexes the sum in Eq.~\eqref{eq:jeva-mixer}. Induction through the state updates therefore preserves $z_L$; shared scoring permutes its outputs. Cached entries are identical because Eq.~\eqref{eq:jeva-memory} contains no request or other candidate. Substitution preserves every subsequent operation.
\end{proof}
A unique winning option is invariant as a semantic item. Ties and finite-precision effects are discussed in Appendix~\ref{app:jeva-method}. No order-consistency loss is required. At fixed width $d$ and FFN width $f$, prepared-memory decision cost is $O(L(d^2+Kd+df))$, linear in $K$. A bank of $M$ options stores $O(M(L+1)d)$ values. New options require encoding and projection, but neither additional output parameters nor a joint token window. Each text has its own encoder length budget; this does not remove individual-text length limits. The $\Omega(K)$ cost of emitting a dense distribution already matches the head's order in candidate count.

\begin{figure*}[t]
  \centering
  \includegraphics[width=.96\linewidth]{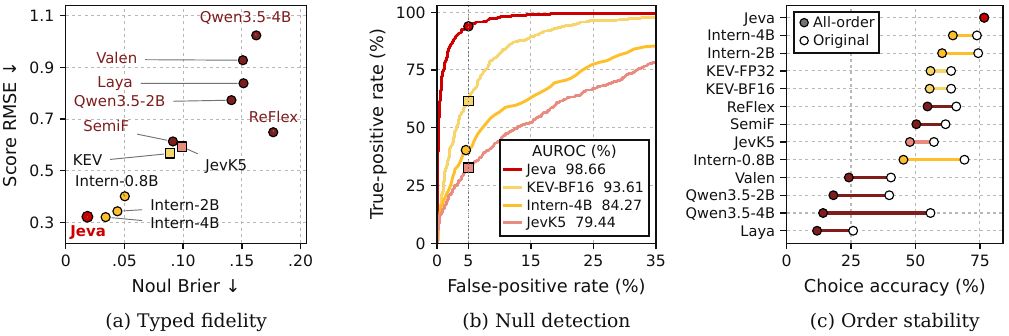}
  \vspace{-2mm}
  \caption{\textbf{Decision quality and stability.} (a) Noul Brier versus Score RMSE;
  lower is better, with different units. (b) Banking77 $K=16$ null ROC:
  the legend gives full AUROC; markers show each measured near-5\% FPR point.
  (c) Original and all-order accuracy on 600 Choice examples; filled markers
  denote all-order accuracy, and segments show permutation sensitivity.}
  \label{fig:decision-quality}
  \vspace{-5mm}
\end{figure*}
\subsection{Direct Supervised Learning}
\label{sec:jeva-training}

For normalized open-set targets $(y_\bot,y_1,\ldots,y_K)$, let $c_y=\sum_i y_i$ and $g=\operatorname{logit}(\rho)$. We minimize
\begin{equation}
 \ell=\operatorname{BCEWithLogits}(g,c_y)-\sum_i y_i\log q_i.
 \label{eq:jeva-loss}
\end{equation}
This primary loss is joint cross-entropy on $p$: all-null examples supervise availability without imposing a spurious candidate ranking. Closed sets use candidate cross-entropy. Stage-dependent Noul and Score auxiliaries, initialization, and optimization details appear in Appendix~\ref{app:jeva-method}.

Jeva-4B uses Qwen3.5-4B with pooled width $2560$, decision width $d=768$, $L=1$, $H=8$, and $f=3072$. We train rank-64 LoRA~\citep{hu2022lora} with both projections and the decision head (13.37M parameters), using separate AdamW parameter groups and global gradient clipping. The curriculum spans typed decisions, intent selection, verification, reading/reasoning, and large-candidate retrieval; stage transitions retain the architecture and preceding weights. LoRA is merged for serving, and memories are prepared for the resulting fixed model.

\section{Experiments}
\label{sec:experiments}

We evaluate decision quality, structural properties, and efficiency using a common suite and controlled permutation, batching, and cache tests. The suite includes Typed Decisions~\citep{localllama2026typed} (2,000 test decisions), BANKING77~\citep{casanueva2020banking77} and MASSIVE~\citep{fitzgerald2023massive} with 8 or 16 candidates, and three Mind2Web~\citep{deng2023mind2web} slices selecting from 16 supplied candidates. Baselines retain their documented interfaces and native runtimes. For open-set evaluation, JevK5 and CLM receive an explicit \texttt{None} candidate, while typed baselines such as Jev, InternDecision, and KEV answer an auxiliary Noul question about candidate availability. For the latter, estimated availability scales the conditional candidate probabilities, with the remaining mass assigned to null. Accuracy uses first-argmax predictions, including null where applicable, on fully supported slices; unsupported requests are reported separately. Local latency uses synchronized H200 timing; within-model comparisons fix the checkpoint and query stream.

\subsection{General and typed decision quality}

Table~\ref{tab:general-accuracy} shows that Jeva leads seven of eight slices in full-coverage accuracy, exceeding 90\% on BANKING77 and MASSIVE at both candidate counts. It achieves 79.99\%, 62.25\%, and 66.23\% on Mind2Web cross-task, cross-website, and cross-domain selection, respectively. On Typed Decisions, Jeva reaches 78.30\%, compared with InternDecision-4B's 78.70\%.

\begin{figure}[tbp]
  \centering
  \includegraphics[width=\linewidth]{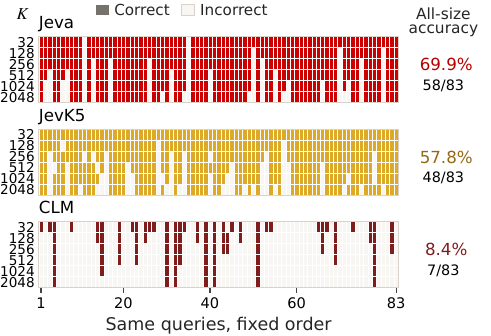}
  \vspace{-7mm}
\caption{\textbf{Correctness across nested candidate sets.} Columns show 83 fixed queries; rows indicate candidate count $K$. Colored cells denote correct predictions. All-size accuracy requires correctness at every tested size.}
\label{fig:nested-candidates}
\vspace{-5mm}
\end{figure}
Table~\ref{tab:typed-stability} evaluates 600 Noul and 800 Score examples. Jeva achieves the lowest Noul Brier (0.0184) and MAE (0.0939) against soft probability targets, and the lowest Score MAE (0.2315). Its Score RMSE is 0.3222, close to InternDecision-4B's 0.3208. Score errors compare predicted expectations with targets in the original rubric units.

We evaluate 600 Choice examples in the original, reversed, and up to three seeded random orders, aligning candidate identities. Jeva records zero flips, mean TV of $3.17\times10^{-8}$, and identical original and all-order accuracy of 76.67\%, consistent with Proposition~\ref{prop:jeva-symmetry}. Jev-1.13.0, InternDecision-4B, and JevK5 exhibit flip rates of 17.00\%, 22.67\%, and 26.17\%, with all-order accuracies of 66.67\%, 64.50\%, and 47.83\%, respectively. All-order accuracy requires correctness under every tested order.
\begin{figure}[tbp]
  \centering
  \vspace{-3mm}
  \includegraphics[width=\linewidth]{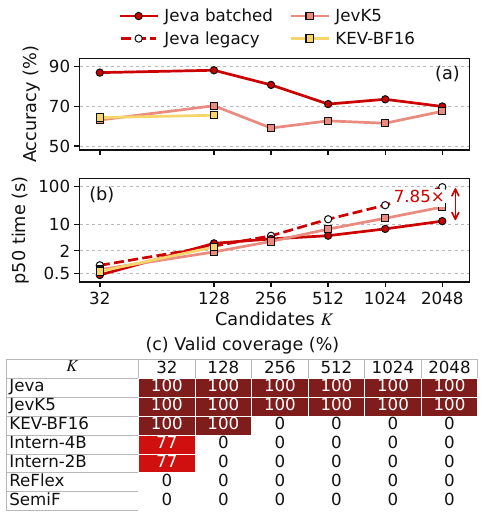}
\caption{\textbf{ToolRet across six disjoint candidate-count cohorts.} (a) Full-support tied-top accuracy. (b) Natural-stream p50 latency; Jeva uses 256-row/131,072-token batching, with legacy encoding dashed. (c) Valid coverage; 77\% denotes 65/84 requests. Unsupported slices are excluded from accuracy.}
  \label{fig:toolret-scaling-full}
  \vspace{-5mm}
\end{figure}
\subsection{Null detection}

On 3,080 matched BANKING77 requests with 16 candidates (588 answer-absent), Jeva achieves 98.66\% null-detection AUROC, versus 93.61\% for KEV-BF16, 84.27\% for InternDecision-4B, and 79.44\% for JevK5. It detects 93.88\% of absent answers at 5.02\% FPR (Figure~\ref{fig:decision-quality}(b)). Baselines use auxiliary coverage Noul outputs or an explicit None candidate. Near-5\% FPR thresholds are each model's empirical 95th percentile of answer-present scores; strictly higher scores predict null.

\begin{figure}[tbp]
  \centering
  \includegraphics[width=\linewidth]{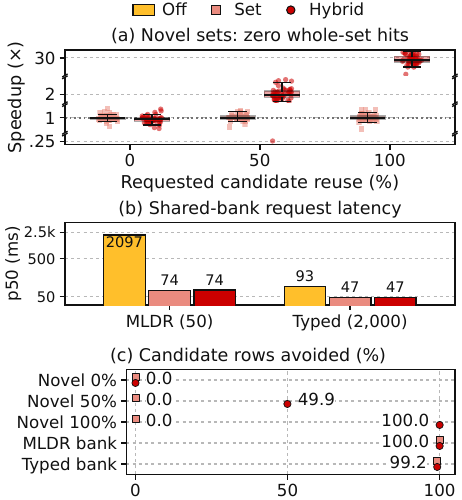}
\caption{\textbf{Candidate reuse} (32-row/16,384-token batching). (a) Paired speedups over Cache-Off: 50 novel-set requests per reuse level, zero set hits, broken axis; boxes show median/IQR with 1.5-IQR whiskers. (b) Shared-bank p50 latency. (c) Encoded rows avoided. Priming is timed separately; no answers are cached.}
  \label{fig:candidate-reuse-full}
  \vspace{-5mm}
\end{figure}

\subsection{Candidate scaling and batching}

Figure~\ref{fig:toolret-scaling-full}(a,c) reports accuracy and coverage on 500 ToolRet requests in distinct-query cohorts with $K\in\{32,128,256,512,1024,2048\}$. Positives are retained (389 multi-positive requests), with weak or unjudged added negatives. Any highest-target-score candidate counts as correct.

Jeva covers all requests with 78.40\% aggregate accuracy, versus 64.00\% for recursive JevK5, while scoring each complete set in one decision-head pass. Both InternDecision variants support only 65/84 requests at $K=32$; larger cohorts exceed their registered limits. Unsupported slices are excluded from accuracy comparisons.

Figure~\ref{fig:toolret-scaling-full}(b) shows the effect of length-aware batching (256 rows/131,072 padded tokens), which preserves cohort accuracies, candidate order, and cache policy. Compared with chunk-2 encoding, p50 latency falls from 31.730 to 7.519~s at $K=1024$ and from 94.459 to 12.040~s at $K=2048$ ($7.85\times$). At $K=2048$, Jeva achieves 69.88\% accuracy, versus JevK5's 67.47\% at 28~s. Encoder batching retains full-set access in the decision head.

\noindent\textbf{Robustness to candidate-set expansion.} Figure~\ref{fig:nested-candidates} evaluates the same 83 queries as nested candidate sets expand from 32 to 2,048 options. Jeva remains correct at every size on 58 queries, versus 48 for JevK5 and 7 for CLM. Its all-size accuracy is 69.88\%, exceeding JevK5 by 12.05 pts.

\subsection{Reuse across states and candidate sets}

Figure~\ref{fig:candidate-reuse-full} compares Cache-Off, exact-set caching (Set), and additional individual-option reuse (Hybrid), using fixed weights, cyclic mode ordering, and cache resets. Only candidate computations are cached, never previous answers.

For 50 matched MLDR queries at 0\%, approximately 50\%, and 100\% option reuse, every novel-set lookup misses the whole-set cache. Hybrid achieves $1.95\times$ and $27.98\times$ speedups in p50 latency at partial and full reuse, while Set remains near 2.1~s. At full reuse, p50 falls from 2,102.0 to 75.1~ms; the median paired speedup is $27.86\times$.

For shared banks, Typed Decisions p50 falls from 93.1~ms to 46.6~ms (Set) and 46.7~ms (Hybrid), avoiding 99.2\% of candidate-row encodings. Primed MLDR p50 falls from 2,097.3~ms to 73.9/74.2~ms. Typed timings include cold starts; MLDR priming is timed separately and excluded. No decisions change across conditions; the maximum absolute probability difference is 0.00736.
\section{Conclusion}
\label{sec:conclusion}

We introduce Jeva, a native architecture specifically designed for system one decision makings. Jeva models decisions as an evolving state reading reusable option memory, connecting permutation symmetry, variable candidate counts, explicit null probability, and caching. The trained Jeva-4B combines broad decision capability with strong null detection, stable predictions under reordering, and savings from hybrid cache. Large-set results show the practical value of candidate-indexed readout. 

\bibliography{references}

@inproceedings{saeidi2018interpretation,
  title        = {Interpretation of Natural Language Rules in Conversational Machine Reading},
  author       = {Saeidi, Marzieh and Bartolo, Max and Lewis, Patrick and Singh, Sameer and Rockt{\"a}schel, Tim and Sheldon, Mike and Bouchard, Guillaume and Riedel, Sebastian},
  booktitle    = {Proceedings of the 2018 Conference on Empirical Methods in Natural Language Processing},
  pages        = {2087--2097},
  year         = {2018},
  doi          = {10.18653/v1/D18-1233},
  url          = {https://aclanthology.org/D18-1233/}
}

@inproceedings{zaheer2017deepsets,
  title        = {Deep Sets},
  author       = {Zaheer, Manzil and Kottur, Satwik and Ravanbakhsh, Siamak and P{\'o}czos, Barnab{\'a}s and Salakhutdinov, Ruslan and Smola, Alexander J.},
  booktitle    = {Advances in Neural Information Processing Systems},
  volume       = {30},
  year         = {2017},
  url          = {https://proceedings.neurips.cc/paper/2017/hash/f22e4747da1aa27e363d86d40ff442fe-Abstract.html}
}

@inproceedings{lee2019settransformer,
  title        = {Set Transformer: A Framework for Attention-based Permutation-Invariant Neural Networks},
  author       = {Lee, Juho and Lee, Yoonho and Kim, Jungtaek and Kosiorek, Adam R. and Choi, Seungjin and Teh, Yee Whye},
  booktitle    = {Proceedings of the 36th International Conference on Machine Learning},
  series       = {Proceedings of Machine Learning Research},
  volume       = {97},
  pages        = {3744--3753},
  year         = {2019},
  url          = {https://proceedings.mlr.press/v97/lee19d.html}
}

@inproceedings{jaegle2021perceiver,
  title        = {Perceiver: General Perception with Iterative Attention},
  author       = {Jaegle, Andrew and Gimeno, Felix and Brock, Andy and Vinyals, Oriol and Zisserman, Andrew and Carreira, Joao},
  booktitle    = {Proceedings of the 38th International Conference on Machine Learning},
  series       = {Proceedings of Machine Learning Research},
  volume       = {139},
  pages        = {4651--4664},
  year         = {2021},
  url          = {https://proceedings.mlr.press/v139/jaegle21a.html}
}

@inproceedings{karpukhin2020dpr,
  title        = {Dense Passage Retrieval for Open-Domain Question Answering},
  author       = {Karpukhin, Vladimir and Oguz, Barlas and Min, Sewon and Lewis, Patrick and Wu, Ledell and Edunov, Sergey and Chen, Danqi and Yih, Wen-tau},
  booktitle    = {Proceedings of the 2020 Conference on Empirical Methods in Natural Language Processing (EMNLP)},
  pages        = {6769--6781},
  year         = {2020},
  doi          = {10.18653/v1/2020.emnlp-main.550},
  url          = {https://aclanthology.org/2020.emnlp-main.550/}
}

@inproceedings{khattab2020colbert,
  title        = {{ColBERT}: Efficient and Effective Passage Search via Contextualized Late Interaction over {BERT}},
  author       = {Khattab, Omar and Zaharia, Matei},
  booktitle    = {Proceedings of the 43rd International ACM SIGIR Conference on Research and Development in Information Retrieval},
  year         = {2020},
  url          = {https://arxiv.org/abs/2004.12832}
}

@inproceedings{hu2022lora,
  title        = {{LoRA}: Low-Rank Adaptation of Large Language Models},
  author       = {Hu, Edward J. and Shen, Yelong and Wallis, Phillip and Allen-Zhu, Zeyuan and Li, Yuanzhi and Wang, Shean and Wang, Lu and Chen, Weizhu},
  booktitle    = {International Conference on Learning Representations},
  year         = {2022},
  url          = {https://openreview.net/forum?id=nZeVKeeFYf9}
}

@inproceedings{guo2017calibration,
  title        = {On Calibration of Modern Neural Networks},
  author       = {Guo, Chuan and Pleiss, Geoff and Sun, Yu and Weinberger, Kilian Q.},
  booktitle    = {Proceedings of the 34th International Conference on Machine Learning},
  series       = {Proceedings of Machine Learning Research},
  volume       = {70},
  pages        = {1321--1330},
  year         = {2017},
  url          = {https://proceedings.mlr.press/v70/guo17a.html}
}

@inproceedings{geifman2017selective,
  title        = {Selective Classification for Deep Neural Networks},
  author       = {Geifman, Yonatan and El-Yaniv, Ran},
  booktitle    = {Advances in Neural Information Processing Systems},
  volume       = {30},
  year         = {2017},
  url          = {https://proceedings.neurips.cc/paper_files/paper/2017/hash/4a8423d5e91fda00bb7e46540e2b0cf1-Abstract.html}
}

@inproceedings{rajpurkar2018squad2,
  title        = {Know What You Don't Know: Unanswerable Questions for {SQuAD}},
  author       = {Rajpurkar, Pranav and Jia, Robin and Liang, Percy},
  booktitle    = {Proceedings of the 56th Annual Meeting of the Association for Computational Linguistics (Volume 2: Short Papers)},
  pages        = {784--789},
  year         = {2018},
  doi          = {10.18653/v1/P18-2124},
  url          = {https://aclanthology.org/P18-2124/}
}

@inproceedings{casanueva2020banking77,
  title        = {Efficient Intent Detection with Dual Sentence Encoders},
  author       = {Casanueva, I{\~n}igo and Tem{\v{c}}inas, Tadas and Gerz, Daniela and Henderson, Matthew and Vuli{\'c}, Ivan},
  booktitle    = {Proceedings of the 2nd Workshop on Natural Language Processing for Conversational AI},
  pages        = {38--45},
  year         = {2020},
  doi          = {10.18653/v1/2020.nlp4convai-1.5},
  url          = {https://aclanthology.org/2020.nlp4convai-1.5/}
}

@inproceedings{deng2023mind2web,
  title        = {{Mind2Web}: Towards a Generalist Agent for the Web},
  author       = {Deng, Xiang and Gu, Yu and Zheng, Boyuan and Chen, Shijie and Stevens, Samuel and Wang, Boshi and Sun, Huan and Su, Yu},
  booktitle    = {Advances in Neural Information Processing Systems},
  volume       = {36},
  year         = {2023},
  url          = {https://proceedings.neurips.cc/paper_files/paper/2023/hash/5950bf290a1570ea401bf98882128160-Abstract-Datasets_and_Benchmarks.html}
}

@inproceedings{toolret2025,
  title        = {Retrieval Models Aren't Tool-Savvy: Benchmarking Tool Retrieval for Large Language Models},
  author       = {Shi, Zhengliang and Wang, Yuhan and Yan, Lingyong and Ren, Pengjie and Wang, Shuaiqiang and Yin, Dawei and Ren, Zhaochun},
  booktitle    = {Findings of the Association for Computational Linguistics: ACL 2025},
  pages        = {24497--24524},
  year         = {2025},
  doi          = {10.18653/v1/2025.findings-acl.1258},
  url          = {https://aclanthology.org/2025.findings-acl.1258/}
}

@inproceedings{chen2024bgem3,
  title        = {{M3-Embedding}: Multi-Linguality, Multi-Functionality, Multi-Granularity Text Embeddings Through Self-Knowledge Distillation},
  author       = {Chen, Jianlyu and Xiao, Shitao and Zhang, Peitian and Luo, Kun and Lian, Defu and Liu, Zheng},
  booktitle    = {Findings of the Association for Computational Linguistics: ACL 2024},
  pages        = {2318--2335},
  year         = {2024},
  doi          = {10.18653/v1/2024.findings-acl.137},
  url          = {https://aclanthology.org/2024.findings-acl.137/}
}

@misc{jevk5,
  title        = {{JevK5}: An Open-Weight {Jev} Alternative},
  author       = {{JevK5 authors}},
  year         = {2026},
  howpublished = {\url{https://github.com/allebee/jevk5}},
  note         = {4B model v0.3}
}

@misc{intern_decision,
  title        = {{Intern-Decision}: Fast Multi-Modal Decision Model},
  author       = {{InternLM}},
  year         = {2026},
  howpublished = {\url{https://github.com/InternLM/Intern-Decision}},
  note         = {Code and model release}
}

@misc{clm,
  title        = {Contrastive Language Models},
  author       = {Kwok, Jacky and Kang, Hangoo and Suresh, Tarun and Saad-Falcon, Jon and Pavone, Marco and R{\'e}, Christopher and Mirhoseini, Azalia},
  year         = {2026},
  howpublished = {\url{https://github.com/Contrastive-LM/CLM}},
  note         = {Technical blog and official implementation}
}

@inproceedings{schick2023toolformer,
  title        = {{Toolformer}: Language Models Can Teach Themselves to Use Tools},
  author       = {Schick, Timo and Dwivedi-Yu, Jane and Dess{\`i}, Roberto and Raileanu, Roberta and Lomeli, Maria and Hambro, Eric and Zettlemoyer, Luke and Cancedda, Nicola and Scialom, Thomas},
  booktitle    = {Advances in Neural Information Processing Systems},
  volume       = {36},
  pages        = {68539--68551},
  year         = {2023},
  url          = {https://proceedings.neurips.cc/paper_files/paper/2023/hash/d842425e4bf79ba039352da0f658a906-Abstract-Conference.html}
}

@inproceedings{qin2024large,
  title        = {Large Language Models are Effective Text Rankers with Pairwise Ranking Prompting},
  author       = {Qin, Zhen and Jagerman, Rolf and Hui, Kai and Zhuang, Honglei and Wu, Junru and Yan, Le and Shen, Jiaming and Liu, Tianqi and Liu, Jialu and Metzler, Donald and Wang, Xuanhui and Bendersky, Michael},
  booktitle    = {Findings of the Association for Computational Linguistics: NAACL 2024},
  pages        = {1504--1518},
  year         = {2024},
  doi          = {10.18653/v1/2024.findings-naacl.97},
  url          = {https://aclanthology.org/2024.findings-naacl.97/}
}

@inproceedings{zheng2023judging,
  title        = {Judging {LLM-as-a-Judge} with {MT-Bench} and {Chatbot Arena}},
  author       = {Zheng, Lianmin and Chiang, Wei-Lin and Sheng, Ying and Zhuang, Siyuan and Wu, Zhanghao and Zhuang, Yonghao and Lin, Zi and Li, Zhuohan and Li, Dacheng and Xing, Eric P. and Zhang, Hao and Gonzalez, Joseph E. and Stoica, Ion},
  booktitle    = {Advances in Neural Information Processing Systems},
  volume       = {36},
  pages        = {46595--46623},
  year         = {2023},
  url          = {https://proceedings.neurips.cc/paper_files/paper/2023/hash/91f18a1287b398d378ef22505bf41832-Abstract-Datasets_and_Benchmarks.html}
}

@misc{almeida2026jev,
  title        = {Introducing {System One Models} \& {Jev}},
  author       = {Almeida, Diogo},
  year         = {2026},
  howpublished = {\url{https://typesafe.ai/blog/introducing-system-one-models-and-jev}},
  note         = {TypeSafe AI Blog}
}

@misc{semif,
  title        = {{SemIf}: Semantic Ifs from Open Models},
  author       = {Lee, Theodore},
  year         = {2026},
  howpublished = {\url{https://github.com/TheoLeeCJ/SemIf-OpenJev}},
  note         = {GitHub repository; formerly OpenJev}
}

@misc{reflex,
  title        = {{reflex}: A Small Open Decision Model},
  author       = {Raghavan, Kshetrajna},
  year         = {2026},
  howpublished = {\url{https://github.com/kshetrajna12/reflex}},
  note         = {Code and model release}
}

@misc{palmer2026kev4b,
  title        = {{Kev-4B}: A Calibrated Decision Model on {Qwen3.5-4B}},
  author       = {Palmer, Jared},
  year         = {2026},
  howpublished = {\url{https://huggingface.co/jaredpalmer/kev-4b}},
  note         = {Kev 1.0}
}

@misc{laya,
  title        = {{Laya}: Multilingual, Non-Autoregressive {System 1} Decision Engine},
  author       = {{Convai Innovations}},
  year         = {2026},
  howpublished = {\url{https://github.com/NandhaKishorM/laya}},
  note         = {Code and model release}
}

@misc{localllama2026typed,
  title        = {Typed Decisions},
  author       = {{LocalLLaMA}},
  year         = {2026},
  howpublished = {\url{https://huggingface.co/datasets/LocalLLaMA/typed-decisions}},
  note         = {Hugging Face dataset}
}

@inproceedings{fitzgerald2023massive,
  author       = {FitzGerald, Jack and
               Hench, Christopher and
               Peris, Charith and
               Mackie, Scott and
               Rottmann, Kay and
               Sanchez, Ana and
               Nash, Aaron and
               Urbach, Liam and
               Kakarala, Vishesh and
               Singh, Richa and
               Ranganath, Swetha and
               Crist, Laurie and
               Britan, Misha and
               Leeuwis, Wouter and
               Tur, Gokhan and
               Natarajan, Prem},
  title        = {{MASSIVE}: A 1{M}-Example Multilingual Natural Language Understanding Dataset with 51 Typologically-Diverse Languages},
  booktitle    = {Proceedings of the 61st Annual Meeting of the Association for Computational Linguistics (Volume 1: Long Papers)},
  year         = {2023},
  pages        = {4277--4302},
  publisher    = {Association for Computational Linguistics},
  doi          = {10.18653/v1/2023.acl-long.235},
  url          = {https://aclanthology.org/2023.acl-long.235/}
}
\newpage
\appendix
\section{Derivations and Structural Properties}
\label{app:jeva-foundations}

\subsection{Entropy-Regularized Candidate Reading}
\label{app:jeva-entropy}
For compatibility vector $c$ and probability simplex $\Delta_K$, the attention weights solve
\begin{equation}
 \begin{aligned}
 a_\ell^h&=\arg\max_{a\in\Delta_K}
 \left\{a^\top c+\tau_\ell\mathcal H(a)\right\},\\
 \mathcal H(a)&=-\sum_i a_i\log a_i.
 \end{aligned}
 \label{eq:jeva-entropy}
\end{equation}
This entropy-regularized selection balances compatibility against the spread of evidence. Learned value and output matrices determine how that evidence changes the state.

\paragraph{Why this aggregation rule?}
Its variational characterization is exact. Let
$Z=\sum_i\exp(c_i/\tau_\ell)$ and
$a_i^*=\exp(c_i/\tau_\ell)/Z$. For every $a\in\Delta_K$, using
$0\log0=0$,
\begin{equation}
 a^\top c+\tau_\ell\mathcal H(a)
 =\tau_\ell\log Z-\tau_\ell D_{\mathrm{KL}}(a\Vert a^*).
 \label{eq:jeva-attention-kl}
\end{equation}
Since $D_{\mathrm{KL}}(a\Vert a^*)\geq0$, the objective is at most $\tau_\ell\log Z$, attained uniquely at $a=a^*$. This proves Eq.~\eqref{eq:jeva-entropy}. The weights are therefore the unique entropy-regularized allocation of attention over candidate evidence. They are intermediate read weights, not the final decision probabilities: the state may gather evidence from several options before committing to one.

\paragraph{Block parameterization.}
Here $F_\ell(r)=W_{\ell,\mathrm{down}} [\operatorname{SiLU}(W_{\ell,\mathrm{gate}}r)\odot W_{\ell,\mathrm{up}}r]$ is a bias-free SwiGLU network with hidden width $f$. We initialize the four decision projection matrices to identity and the SwiGLU output matrix to zero. Subsequent optimization learns both branches. Each temperature is parameterized as $\tau=10^{-4}+\operatorname{softplus}(\eta)$ with its own trainable scalar $\eta$. The coupling gives $0<\alpha_\ell/\tau_\ell<1/\sqrt L$, scaling the mixing coefficient with both selection temperature and decision depth.

\paragraph{Separate temperatures for separate questions.} $\tau_\ell$ controls evidence aggregation within a block, $\tau_\mathrm{dec}$ controls the final competition, and $\tau_\mathrm{cov}$ controls the slope of the answer-availability gate. For fixed logits $c$ and $a(\tau)=\softmax(c/\tau)$, $\mathrm d\mathcal H(a(\tau))/\mathrm d\tau =\operatorname{Var}_{i\sim a(\tau)}(c_i)/\tau^3\geq0$ (Appendix~\ref{app:jeva-sensitivity}). Decoupling the parameters therefore lets training tune evidence mixing without forcing an identical readout temperature. The numerical temperatures need not be directly comparable: per-head cosine similarities and final state--option dot products have different scales.

\subsection{Geometry and Single-Block Expressivity}
\label{app:jeva-setproof}
\paragraph{Geometry of the selection correction.}
The candidate interaction can be seen directly in score space. Let $S_h$ insert a head vector into the concatenation, let $W_\ell^{V,h}$ be the corresponding value-projection rows, and define
\begin{equation}
 G_{\ell,ij}^{h}=u_i^\top W_\ell^O S_h W_\ell^{V,h}u_j.
 \label{eq:jeva-geometry}
\end{equation}
Before the channel-refinement branch, the score of candidate $i$ changes by
\begin{equation}
 u_i^\top\widetilde z_\ell-u_i^\top z_\ell
 =\alpha_\ell\sum_{h,j}a_{\ell j}^{h}G_{\ell,ij}^{h}.
 \label{eq:jeva-correction}
\end{equation}
The distinction from independent scoring already appears with one block.

We give the construction for Proposition~\ref{prop:jeva-setcontext}.
\begin{proof}
In decision representation space, use a two-dimensional, single-head block with identity Q/K/V/O projections and zero FFN output. Let
\begin{equation}
 \begin{gathered}
 z_0=(1,0),\quad A=(4,3)/5,\\
 B=(4,-3)/5,\quad C_\pm=(3,\pm4)/5,\\
 \mathcal O_\pm=(A,B,C_\pm).
 \end{gathered}
 \label{eq:jeva-context-construction}
\end{equation}
All vectors are unit length, so query/key normalization leaves them unchanged. Choose $\tau_\mathrm{set}=3/20$ and $\gamma=-\log2$, giving $\alpha=1/20$. The weight of $C_\pm$ is $t=(1+2e^{4/3})^{-1}$. The two refined states are $z_\pm=(1+\alpha(4-t)/5,\,\pm4\alpha t/5)$, hence
\begin{equation}
 s_A(\mathcal O_\pm)-s_B(\mathcal O_\pm)
 =\pm\frac{24\alpha t}{25}.
 \label{eq:jeva-context-flip}
\end{equation}
The winning $A$ or $B$ also exceeds $C_\pm$: its initial score advantage is $1/5$, and a correction of norm at most $\alpha$ can reduce a pairwise margin by at most $2\alpha<1/5$. Thus both winners are unique. An independent scorer cannot reverse this relative ordering by Eq.~\eqref{eq:jeva-independent-odds}; equal scores with a fixed tie-breaking rule also select the same one of $A,B$ in both sets. To embed the construction into $H$ heads of width at least two, place one copy of each two-dimensional vector in every head, zero-pad the remaining coordinates, and scale each copy by $H^{-1/2}$. Per-head query/key normalization recovers the original attention weights, while the summed headwise dot products reproduce the original scores.
\end{proof}

Thus, evidence selected from one candidate can affect another candidate's score through learned geometry, without updating either candidate. For a single head with identity value/output maps, this reduces to the Gram correction $\alpha_\ell UU^\top a_\ell$. The deployed multi-head model learns a more general interaction. It computes the weighted message first, never materializing a $K\times K$ matrix.

The identity in Eq.~\eqref{eq:jeva-correction} explains how fixed candidate vectors still support relational decisions. Candidate $j$ contributes to candidate $i$ through $G_{\ell,ij}^{h}$, weighted by its relevance to the current state. The matrix is an algebraic description of the interaction, not a stored tensor.

The proposition identifies a representational capability, not a guarantee that every set change should reverse a preference. It shows why one decision block is already functionally distinct from an independent similarity readout. Its state update can reflect relations within the current set while retaining cacheable candidate features. Figure~\ref{fig:jeva-context-mechanism} visualizes the construction. Conversely, low total attention on added alternatives gives an explicit bound on their influence; Appendix~\ref{app:jeva-added-candidates} derives this bound for the actual matrix-valued mixer.

\subsection{Coverage and Candidate Count}
\label{app:jeva-coverage-proof}
The following expands Proposition~\ref{prop:jeva-coverage}. Fix the final candidate scores. Adding scores no larger than the existing maximum leaves $\rho$ in Eq.~\eqref{eq:jeva-coverage} unchanged. By contrast, a flat softmax with a fixed null score $b_0$ and $K$ equal candidate scores $s$ gives
\begin{equation}
 \begin{aligned}
 p^{\mathrm{flat}}_\bot
 &=\frac{1}{1+K\exp((s-b_0)/\tau)}\\
 &=\sigma\!\left(\frac{b_0-s}{\tau}-\log K\right).
 \end{aligned}
 \label{eq:jeva-null-cardinality}
\end{equation}
\begin{proof}
The maximum, and thus the argument of the coverage sigmoid, is unchanged under the stated addition. For flat softmax, summing the $K$ identical candidate exponentials and dividing by the null exponential gives Eq.~\eqref{eq:jeva-null-cardinality}.
\end{proof}

Proposition~\ref{prop:jeva-coverage} removes a compulsory $\log K$ contribution to the \emph{coverage} decision. Set refinement can still change the scores, and competition still redistributes probability across options. In particular, a joint argmax over null and individual candidates is distinct from thresholding coverage. The factorization assigns these effects to explicit components that can be trained and examined separately. Figure~\ref{fig:jeva-coverage-mechanism} illustrates the two components with fixed-score examples.

\subsection{Detailed Reuse and Symmetry Proofs}
\label{app:jeva-structural-proofs}
\paragraph{Exact reuse.}
The following expands the reuse part of Proposition~\ref{prop:jeva-symmetry}.
\begin{proof}
Equation~\eqref{eq:jeva-memory} contains no state, question, or other candidate. A stored entry is thus the same value as a recomputed entry. Gathering entries in the requested order supplies identical inputs to every block. Induction through Eq.~\eqref{eq:jeva-update} yields identical states, followed by identical competition, coverage, and typed readouts.
\end{proof}

The result concerns reuse of \emph{computation}, not reuse of an answer. Figure~\ref{fig:jeva-reuse-mechanism} illustrates reuse across new requests whose candidate sets overlap without repeating. Even with every option cached, the request is encoded and the full decision module is evaluated. Both semantic vectors and layer-specific key/value projections can be reused: learned matrix views do not break cacheability when they depend only on an unchanged option.

With FFN width $f$, preparing layer views costs $O(LKd^2)$, while a request with prepared memories costs $O(L(d^2+Kd+df))$ for the decision module, in addition to state encoding. For $f=4d$, this is $O(L(d^2+Kd))$. A bank of $M$ entries stores $O(M(L+1)d)$ numbers. Total computation is the sum of text encoding and decision costs: encoding the request and any uncached options, preparing missing views, and applying the linear-in-$K$ candidate interaction. Reuse removes the corresponding option-side terms. Producing a dense distribution over all $K$ candidates itself requires $\Omega(K)$ output writes. At fixed width and depth, the prepared-memory head therefore has optimal order of growth in $K$ for this output interface. This is a computation bound, not an accuracy extrapolation claim. Independent encoding also assigns a separate token budget to each text, rather than requiring the entire option list to fit one concatenated context window.

\paragraph{Candidate symmetry.}
We expand the symmetry part of Proposition~\ref{prop:jeva-symmetry}.
\begin{proof}
Let $\pi$ be a permutation of $\{1,\ldots,K\}$, with candidates and their metadata reordered together, so $u'_i=u_{\pi(i)}$. Independent encoding leaves $z'_0=z_0$ and permutes the option rows. If $z'_\ell=z_\ell$, the queries agree, keys and values permute, and the softmax denominator is unchanged. Therefore $a'{}_{\ell i}^h=a_{\ell,\pi(i)}^h$, and reindexing gives
\begin{equation}
 \sum_i a'{}_{\ell i}^h v'{}_{\ell i}^h
 =\sum_j a_{\ell j}^h v_{\ell j}^h.
 \label{eq:jeva-permutation-proof}
\end{equation}
Both residual branches receive the same state and message, so $z'_{\ell+1}=z_{\ell+1}$. Induction yields $z'_L=z_L$. Final dot products and their softmax permute, whereas their maximum and coverage do not. Score values move with their candidates, so reindexing preserves the weighted expectation. Noul continues to read the same positive candidate's probability. A unique maximizing candidate retains its identity.
\end{proof}

This structural symmetry requires no permutation augmentation or consistency loss. It concerns reorderings of the same alternatives; candidate multiplicity and set composition remain meaningful inputs. Numerical tolerances and the tied-maximum case are described in Appendix~\ref{app:jeva-method}.

\subsection{Joint Cross-Entropy Identity}
\label{app:jeva-ce-identity}
The open-set loss in Eq.~\eqref{eq:jeva-loss} is joint cross-entropy.
Indeed,
\begin{equation}
 \begin{aligned}
 &-y_\bot\log p_\bot-\sum_i y_i\log p_i\\
 &\quad=-y_\bot\log(1-\rho)-c_y\log\rho\\
 &\qquad\quad-\sum_i y_i\log q_i\\
 &\quad=\operatorname{BCEWithLogits}(g,c_y)\\
 &\qquad\quad-\sum_i y_i\log q_i,
 \end{aligned}
 \label{eq:jeva-loss-proof}
\end{equation}
using $y_\bot+c_y=1$ and $\rho=\sigma(g)$. Under this primary joint log loss, all-null examples supervise coverage without assigning a relative-ranking target; answer-present examples supervise both coverage and competition. The corresponding population log-loss decomposition is given in Appendix~\ref{app:jeva-proper-loss}.

\section{Method Proofs and Implementation Details}
\label{app:jeva-method}

\subsection{The Single-Block Construction in Higher Dimensions}
\label{app:jeva-context}

Proposition~\ref{prop:jeva-setcontext} is an existence statement in decision representation space. To embed its two-dimensional vectors in $H$ heads of width $d_h\geq2$, define
\[
 J(v)=H^{-1/2}\operatorname{Concat}_{h=1}^{H}(v_1,v_2,0,\ldots,0).
\]
Then $J(v)^\top J(w)=v^\top w$. For unit $v$, each nonzero head slice has norm $H^{-1/2}$; normalizing queries and keys cancels this scale. Every head therefore uses the two-dimensional attention weights. Values retain the $H^{-1/2}$ factor, and identity output projection gives $z_1=J(z_\pm)$ after mixing. Its readout scores are exactly those of the original construction. This embeds it in $d=768,H=8$. Setting $W_{\mathrm{down}}=0$ gives zero FFN output independently of RMSNorm, completing the construction within the deployed architecture.

For completeness, if an independent scorer ties $A$ and $B$, a fixed tie-breaking rule orders them identically in both sets. Equal scores therefore cannot resolve the contradiction in the proof. A common coverage factor also cancels from $p_A/p_B$, so adding a coverage gate to an otherwise independent scorer does not remove this restriction.

\subsection{Temperature and the Sensitivity of Candidate Aggregation}
\label{app:jeva-sensitivity}

\paragraph{Entropy at fixed compatibility.}
For fixed finite logits $c_i$, write $a_i(\tau)=e^{c_i/\tau}/Z(\tau)$ and $\mu(\tau)=\sum_i a_i(\tau)c_i$. Differentiating gives
\[
 \frac{\mathrm da_i}{\mathrm d\tau}
 =\frac{a_i(\mu-c_i)}{\tau^2},\qquad
 \frac{\mathrm d\mu}{\mathrm d\tau}
 =-\frac{\operatorname{Var}_{a(\tau)}(c)}{\tau^2}.
\]
Since $\mathcal H(a)=\log Z-\mu/\tau$ and $\mathrm d\log Z/\mathrm d\tau=-\mu/\tau^2$, the two mean terms cancel, yielding
\begin{equation}
 \frac{\mathrm d\mathcal H(a(\tau))}{\mathrm d\tau}
 =\frac{\operatorname{Var}_{a(\tau)}(c)}{\tau^3}\geq0.
 \label{eq:jeva-entropy-temperature}
\end{equation}
For fixed logits, attention entropy is therefore nondecreasing in temperature.

\paragraph{Why the matrix model involves cross-covariance.}
Fix one head's candidate keys and values, and treat its normalized query $q$ as the input to $\mu_v(q)=\sum_i a_i(q)v_i$. Put $\bar k=\sum_i a_i k_i$. The softmax derivative gives $\nabla_q a_i=a_i(k_i-\bar k)/\tau$, hence
\begin{equation}
 \begin{aligned}
 D_q\mu_v
 &=\frac{1}{\tau}\left[\sum_i a_i v_i k_i^\top
       -\mu_v\bar k^\top\right]\\
 &=\frac{\operatorname{Cov}_a(v,k)}{\tau}.
 \end{aligned}
 \label{eq:jeva-cross-covariance}
\end{equation}
If $\|k_i\|\leq1$ and $\|v_i\|\leq R_v$, this derivative has operator norm at most $R_v/\tau$. Indeed, for unit vectors $r,w$, Cauchy--Schwarz bounds $|\operatorname{Cov}(r^\top v,w^\top k)|$ by the square root of the two variances, which are at most $R_v^2$ and $1$.

The learned matrix model uses different keys and values, so this cross-covariance need not be symmetric or positive semidefinite. To differentiate a complete block with respect to $z$, one must also apply the query projection and normalization derivatives, the output projection, and the RMSNorm--SwiGLU residual. The coupling $\alpha/\tau<1/\sqrt L$ bounds the scalar prefactor of the cross-covariance contribution within this derivative composition.

\subsection{A Quantitative Bound for Added Candidates}
\label{app:jeva-added-candidates}

Set dependence is useful when alternatives supply relevant evidence. For weakly attended additions, its magnitude can be bounded. Fix one head's incoming query and the original keys/values. Add $r$ candidates whose compatibility logits are at most $c_{\max}-\delta$, where $c_{\max}$ is the original maximum and $\delta\geq0$. Let $\beta$ be their total attention mass after insertion. Then
\begin{equation}
 \begin{gathered}
 \beta\leq\frac{r e^{-\delta/\tau}}{1+r e^{-\delta/\tau}},\\
 \|\mu_{\mathrm{new}}-\mu_{\mathrm{old}}\|
 \leq 2R_v\beta
 \end{gathered}
 \label{eq:jeva-added-bound}
\end{equation}
provided every old and new value has norm at most $R_v$.

\begin{proof}
Let $Z=\sum_{\mathrm{old}}e^{c_i/\tau}$ and $R=\sum_{\mathrm{new}}e^{c_i/\tau}$. Since $Z\geq e^{c_{\max}/\tau}$ and $R\leq r e^{(c_{\max}-\delta)/\tau}$, $\beta=R/(Z+R)$ obeys the first bound. The original weights are all rescaled by $1-\beta$. Writing $\mu_{\mathrm{add}}$ for the normalized mean over new values gives the exact identity $\mu_{\mathrm{new}}=(1-\beta)\mu_{\mathrm{old}} +\beta\mu_{\mathrm{add}}$. Both means lie in a radius-$R_v$ ball, which proves the second bound.
\end{proof}

For the actual multi-head message, let the corresponding masses and value-norm bounds be $\beta_h,R_h$. Concatenation and the output matrix give
\begin{equation}
 \|\Delta m\|\leq2\|W^O\|_2
 \left(\sum_h R_h^2\beta_h^2\right)^{1/2}.
 \label{eq:jeva-added-multihead}
\end{equation}
In the single-block model, the initial state is independent of the set, so its incoming query is indeed fixed across this comparison. If $F\circ\operatorname{RMSNorm}$ has Lipschitz constant $B$ on the segment between the two post-mixer states, then
\[
 \begin{gathered}
 \|\Delta z_1\|\leq(1+B)\alpha\|\Delta m\|,\\
 \left|\Delta\log(q_i/q_j)\right|
 \leq\frac{2\|\Delta z_1\|}{\tau_{\mathrm{dec}}}
 \end{gathered}
\]
for any two unchanged options. The first inequality follows by the triangle inequality through the residual FFN, and the second uses $\|u_i-u_j\|\leq2$. The bound depends on the additions' total attention mass and the local FFN sensitivity. Extending it across multiple blocks additionally requires bounds on changes in the incoming state.

\subsection{Joint Log Loss and the Meaning of Null}
\label{app:jeva-proper-loss}

Let the true conditional output distribution be $p^\star=(1-\rho^\star,\rho^\star q^\star)$ and the model distribution be $p=(1-\rho,\rho q)$. Expanding the KL divergence gives
\begin{equation}
 \begin{aligned}
 &D_{\mathrm{KL}}(p^\star\Vert p)\\
 &=(1-\rho^\star)\log\frac{1-\rho^\star}{1-\rho}
 +\rho^\star\log\frac{\rho^\star}{\rho}\\
 &\quad+\rho^\star\sum_iq_i^\star\log\frac{q_i^\star}{q_i}\\
 &=D_{\mathrm{KL}}\!\left(\operatorname{Ber}(\rho^\star)
                    \Vert\operatorname{Ber}(\rho)\right)\\
 &\quad+\rho^\star D_{\mathrm{KL}}(q^\star\Vert q).
 \end{aligned}
 \label{eq:jeva-population-kl}
\end{equation}
The excess expected primary joint log loss decomposes into coverage error and conditional competition error, with the latter weighted by $\rho^\star$. If $\rho^\star=0$, the competition term vanishes and $q^\star$ can be chosen arbitrarily.

Coverage and the joint maximum-probability decision also have distinct semantics. With $g=\operatorname{logit}(\rho)$ and $q_{\max}=\max_iq_i$, null wins strictly when
\begin{equation}
 1-\rho>\rho q_{\max}
 \quad\Longleftrightarrow\quad g<-\log q_{\max}.
 \label{eq:jeva-null-map}
\end{equation}
Under uniform competition, null wins when $g<\log K$, so the joint decision depends on both coverage and candidate count.

\subsection{Symmetry, Cache Identity, and Numerical Evaluation}

With tied maxima, the set of maximizing semantic candidates is invariant. An implementation returning the first maximizing index need not select the same tied member after permutation; identity-based tie-breaking resolves that case. The proofs assume exact arithmetic: floating-point reductions and batching changes can perturb nearly tied scores. Dropout is disabled during inference. A permutation moves complete text, masks, and output metadata; it does not rename labels within the text or rewrite the question.  Cache identity requires unchanged encoder and projection weights, tokenizer, templates, length policy, and numeric configuration. Repeated options are gathered repeatedly, preserving multiplicity. Different layers may use different K/V matrices because their outputs still depend only on the same typed option. Invalidating memories after parameter updates is necessary; old training activations are not reusable under the cache-equivalence proposition.

\subsection{Detailed Computational Cost}

For each uncached layer, the leading multiply--accumulate count is
\[
 2Kd^2+2d^2+2Kd+3df.
\]
These terms respectively represent option key/value projections, query and output projections, attention and aggregation, and SwiGLU. Prepared memories remove $2Kd^2$ from the per-request count; the final readout additionally costs $Kd$. Normalization, softmax, and memory gathering contribute linear work. No $K^2$ interaction tensor is constructed. Each atomic entry stores $d+2Ld$ scalar values, with keys evaluated in FP32. Separate assembled-set entries may add storage beyond this atomic-bank count.

\subsection{Encoding and Numerical Configuration}

The state template is \texttt{Decision type: \{t\}} followed by \texttt{Question: \{question\}} and \texttt{State:} with the state on the next line. Each independent option uses \texttt{Option for \{t\}: \{option\}}. Inputs do not use a chat template. Right padding and the last nonpadding hidden state implement pooling. Length limits are explicit configuration values: the supplied collator can truncate, while full-text evaluations must check lengths before calling it. 

The inspected Jeva-4B checkpoint has one matrix block, eight heads of width $96$, decision width $768$, and SwiGLU width $3072$. Its two $2560$-to-$768$ projections and decision core contain $13{,}370{,}117$ parameters together. Neither initial-state nor final-state normalization is enabled; the FFN uses RMSNorm with $\epsilon=10^{-6}$. Decision dropout is zero. LoRA uses rank $64$, scaling parameter $128$, and training dropout $0.05$, targeting the configured attention, gated linear-attention, and feed-forward maps. Inference loads the backbone, trained adapter, and custom head together.

Query/key normalization, compatibility scores, candidate softmax, and coverage are computed in FP32, alongside a BF16 backbone. Temperatures are learned through softplus parameterization and initialized to $\tau_{\mathrm{set}}=0.15$ and $\tau_{\mathrm{dec}}=\tau_{\mathrm{cov}}=0.10$. The mixing ratio is initialized to $\alpha/\tau_{\mathrm{set}}=0.60/\sqrt L$.

\subsection{Type-Specific Losses and Optimization}

For Score, let $\bar y_i=y_i/\max(\sum_j y_j,10^{-8})$ be the renormalized candidate target, $Q_j,Y_j$ the cumulative masses sorted by numeric value, and $R=\max(v_{\max}-v_{\min},10^{-8})$. The auxiliary components are
\[
 \left(\frac{\sum_i(q_i-\bar y_i)v_i}{R}\right)^2,
 \qquad \frac{1}{K-1}\sum_{j=1}^{K-1}(Q_j-Y_j)^2.
\]
They apply to valid numeric rubrics with $K\geq2$. An all-null target gives $\bar y=0$; enabled auxiliary terms are still evaluated in the current implementation. For Noul, with $p=q_{i^+}$ and $y=\bar y_{i^+}$, the components are $(p-y)^2$ and, for hard labels, $[m-(2y-1)\log(p/(1-p))]_+$. The latter clamps $p$ to $[10^{-6},1-10^{-6}]$ in FP32 only for computing the margin; it does not clamp the reported probability. Soft labels omit the margin.

The trainer clips the accumulated global gradient norm before synchronized optimizer updates, with configured threshold $1.0$ in the supplied recipes, and rejects nonfinite losses or gradient norms. Non-reentrant gradient checkpointing can reduce activation memory during LoRA training. In fully frozen-encoder mode the backbone instead stays in evaluation mode and runs without gradients; projections and the decision core remain trainable.

\section{Detailed Experimental Protocols}
\label{sec:appendix-experiments}

\subsection{Training curriculum}
Jeva follows a staged curriculum with a unified \textsc{Choice}, \textsc{Noul}, and \textsc{Score} interface. Training begins with basic typed decisions, intent recognition, and action selection ($\sim$80k examples), then expands to preference comparison, semantic judgments, and general reasoning ($\sim$500k). A counterfactual and verification stage combines $\sim$250k additional examples with earlier-data replay, followed by evidence-grounded and open-set decisions ($\sim$60k). Reading and logical reasoning ($\sim$20k) precede a balanced mixture of difficult examples and replay ($\sim$60k). Within this mixture, $\sim$15k examples blend original targets with teacher probabilities at a 7:3 ratio, accepting teacher guidance only when its top prediction matches the original target under two independently shuffled candidate orders; 30k replay examples equally mix original targets and preceding-model probabilities. Targeted capability refinement ($\sim$10k) follows, before training extends to long texts and large candidate pools ($\sim$80k). A feedback-driven capability-retention phase ($\sim$2k) precedes and can recur during the final scoring-focused stage with mixed-task replay ($\sim$70k). Supervision combines hard labels, soft probabilities, explicit null outcomes, and numerical rubric targets. Reported counts are per-stage training-set sizes and include replay where applicable.

\subsection{General evaluation and typed targets}

The general suite contains Typed Decisions, Banking77 and MASSIVE at candidate counts $K\in\{8,16\}$, and three Mind2Web $K=16$ slices: cross-task, cross-website, and cross-domain. Typed Decisions includes 600 Choice, 600 Noul, and 800 Score examples. Banking77 and MASSIVE include both answer-present and answer-withheld requests. Mind2Web evaluates selection from a supplied candidate pool.

General accuracy uses first-argmax correctness, including null for open-set requests, and is reported for fully supported slices. Unsupported requests are documented separately in the coverage results. Noul Brier measures squared error against the soft positive-candidate target. Score MAE and RMSE compare numerical expectations in the rubric's original units. Figure~\ref{fig:decision-quality}(a) displays Noul Brier against Score RMSE, showing Jeva's joint probability and numerical prediction fidelity within the same typed interface.

\subsection{Null operating points and candidate permutations}

The null comparison uses 3,080 matched Banking77 $K=16$ requests:
2,492 answer-present and 588 answer-absent cases. Jeva uses its coverage
head; the other interfaces use auxiliary coverage Noul outputs or an
explicit None candidate. 

Candidate-order tests use 600 Choice examples, their original order, its reversal, and up to three seeded random permutations. Outputs are aligned by candidate identity before comparison. Flip rate counts an example whenever any tested order changes its decision; all-order accuracy requires correctness under every tested order. Mean TV is total-variation distance between aligned distributions. Figure~\ref{fig:decision-quality}(c) connects each model's original and all-order accuracy. 

\subsection{Large-set support and bounded option encoding}
ToolRet contains 500 requests across six distinct-query cohorts, $K\in\{32,128,256,512,1024,2048\}$, with 83--84 queries per cohort. Supplied positives are retained; 389 requests are multi-positive. Additional candidates include weak or unjudged negatives, and accuracy accepts a top-scoring target candidate. 

Jeva completes all 500 requests and obtains 78.40\% aggregate accuracy, including 86.90\% at $K=32$ and 69.88\% at $K=2048$. JevK5 covers the complete sweep through recursive selection and obtains 64.00\% aggregate accuracy. Jeva directly scores the full candidate set in a single decision-head pass. The support map in Figure~\ref{fig:toolret-scaling-full}(c) reports every measured model--slice combination. Both evaluated InternDecision variants support 65 of 84 requests at $K=32$ (77.38\%); their remaining slices exceed the registered interface limits. The plotted support label 77 rounds this percentage. Accuracy is plotted only for full-support slices.

Request latency is measured on one NVIDIA H200 with device synchronization, excluding model loading. Each model uses its respective runtime; within-model batching and cache comparisons fix the checkpoint and query stream.

\subsection{Cache conditions, timing, and numerical agreement}

The cache ablation distinguishes an exact-set cache from a hybrid cache that also reuses individual candidate representations. Each episode runs under Off, Set, and Hybrid conditions, with cyclic mode ordering and cache reset between conditions. The completed experiment uses chunk-32/16,384-token batching and contains 171 episodes and 2,188 measured request identities. It includes different states sharing an ordered candidate bank and novel candidate sets with 0\%, 50\%, or 100\% requested option reuse. Novel-set priming uses a different question and candidate set, ensuring zero exact-set hits for the measured request. Priming is timed separately and excluded from reported request latencies. Figure~\ref{fig:candidate-reuse-full}(a) shows paired per-request speedup distributions for 46 novel-set requests at each reuse level.

\subsection{Input Templates and Typed Examples}
\label{app:jeva-input-examples}

\paragraph{Encoder inputs.}
Jeva separately formats the request and each candidate. The request template $T_x(t,\iota,x)$ is:
{\small
\begin{verbatim}
Decision type: {decision_type}
Question: {question}
State:
{state}
\end{verbatim}
}
The candidate template $T_o(t,o_i)$ is:
{\small
\begin{verbatim}
Option for {decision_type}: {option}
\end{verbatim}
}
Here, \texttt{decision\_type} is \texttt{choice}, \texttt{noul}, or \texttt{score}. Each formatted text is independently processed by the shared encoder. Candidate inputs contain neither the current request nor other candidates, and the template adds no candidate-position identifier. Consequently, changing the question, state, or candidate order does not change an otherwise identical candidate input.

\paragraph{Choice.}
Consider the state \emph{I was charged twice.}, the question \emph{Which team should handle this?}, and candidates \emph{Billing}, \emph{Technical support}, and \emph{Delivery}. The request input is:
{\small
\begin{verbatim}
Decision type: choice
Question: Which team should handle this?
State:
I was charged twice.
\end{verbatim}
}
The three candidate inputs are shown below; each line is encoded separately:
{\small
\begin{verbatim}
Option for choice: Billing
Option for choice: Technical support
Option for choice: Delivery
\end{verbatim}
}
A closed-set request selects from $q$. With \texttt{allow\_null=true}, selection instead uses $(p_\bot,p_1,p_2,p_3)$, where $p_\bot=1-\rho$ and $p_i=\rho q_i$. Jeva obtains null probability from its coverage gate; no additional \texttt{None} candidate is encoded.

\paragraph{Noul.}
For the same state, let the question be \emph{Was the customer charged twice?} and set $t=\texttt{noul}$. The default Boolean candidates are:
{\small
\begin{verbatim}
Option for noul: false
Option for noul: true
\end{verbatim}
}
The interface uses zero-based \texttt{positive\_index=1}, so the returned Noul value is $q_2$ in the paper's one-based notation. It represents the probability of the affirmative candidate, not the probability that the candidate set lacks an answer. The question is included in the request input, while the Boolean candidate representations remain reusable across questions of the same type.

\paragraph{Score.}
For the question \emph{How severe is the problem?}, set $t=\texttt{score}$ and provide the rubric:
{\small
\begin{verbatim}
Option for score: Low
Option for score: Medium
Option for score: High
\end{verbatim}
}
With numerical values $(v_1,v_2,v_3)=(0,1,2)$, the output is the expectation $\sum_i q_i v_i=q_2+2q_3$. The values specify the output scale; they are not appended as an additional encoder field. Likewise, \texttt{positive\_index} and \texttt{allow\_null} control output interpretation without entering the text templates. Each typed question produces its own request representation and candidate distribution.

\subsection{Evaluation Adapters and Cache Examples}
\label{app:jeva-adapters-cache}

\paragraph{Auxiliary coverage queries.}
For open-set evaluation, Jev, InternDecision, KEV, Re\textperiodcentered Flex, Laya, and Valen receive an auxiliary Noul question alongside candidate selection. It uses the same state and the following instruction template:
\begin{quote}
\small\ttfamily
\{question\}\par\medskip
Determine whether at least one of the following candidates correctly applies to the state:\par
- \{option\_1\}\par
- \{option\_2\}\par
\ldots\par
- \{option\_K\}
\end{quote}
The affirmative and negative criteria are:
\begin{quote}
\small\ttfamily
true: At least one supplied candidate applies.\par
false: None of the supplied candidates applies.
\end{quote}
SemiF receives the same coverage instruction as a separate two-option categorical question through its native scoring interface. These adapters introduce no new trained prediction head.

Let $\hat c$ be the resulting affirmative probability and $\hat q_i$ the conditional candidate probability. The evaluated open-set distribution is
\[
\hat p_\bot=1-\hat c,\qquad
\hat p_i=\hat c\,\hat q_i.
\]
Thus candidate selection and answer availability contribute separately to the final prediction.

\paragraph{Explicit null candidates.}
JevK5 and CLM instead receive an additional candidate appended after the original options. JevK5 uses the literal text \texttt{None}; CLM uses:
\begin{quote}
\small\ttfamily
None of the above. None of the listed options applies to this request.
\end{quote}
The appended candidate's probability is interpreted as null probability. Original candidate texts and their relative order are preserved. For evaluation, all open-set distributions are arranged as $(\hat p_\bot,\hat p_1,\ldots,\hat p_K)$ and scored by first-argmax, including null. This classification rule is separate from the threshold sweep used to measure null-detection AUROC.

\paragraph{Reuse across new states and sets.}
Consider two Choice requests using the following candidate identities:
\[
\begin{aligned}
\mathcal O^{(1)}&=(A,B,C),\\
\mathcal O^{(2)}&=(C,A,D),
\end{aligned}
\]
where $A$, $B$, $C$, and $D$ denote \emph{Billing}, \emph{Technical support}, \emph{Delivery}, and \emph{Account security}. These letters identify candidates in the example and are not inserted into their encoder inputs. The first state may report a duplicate charge, while the second reports an undelivered package.

Assume an initially empty cache, fixed model parameters, and sufficient capacity to retain the entries. The first request prepares memory for $A$, $B$, and $C$. The second request misses the exact-set cache because its ordered candidate tuple is new. Hybrid caching nevertheless retrieves the semantic vectors and layer-specific key/value projections of $C$ and $A$, computes only the missing entry for $D$, and assembles memory in the new order. Exact-set caching alone recomputes the candidate memory on this miss.

A later request with an already cached candidate tuple can reuse its assembled memory even when its state or question changes. In every case, Jeva still encodes the current request, refines its state, and computes new output probabilities. In document retrieval, different queries reuse the same long passages; in tool selection, requests draw from a shared registry of detailed tool descriptions; in assessment, different responses are evaluated against the same rubric. Caching option-side computation avoids repeatedly encoding this persistent content while preserving request-specific decisions.

\paragraph{Cache identity.}
Within a fixed model instance, individual-option keys use the formatted token sequences, and exact-set keys use their ordered tuples. Decision type is therefore part of the candidate's cache identity through $T_o$. Changes to model weights or encoder settings require cache invalidation. Proposition~\ref{prop:jeva-symmetry} establishes exact-arithmetic equivalence when cached entries equal their recomputed counterparts.

\end{document}